\newtheorem{definition}{Definition} 
\newtheorem*{definition_1}{Definition 1} 
\newtheorem{assumption}{Assumption}
\newtheorem*{assumption_1}{Assumption 1}
\newtheorem{proposition}{Proposition}
\newtheorem*{proposition_1}{Proposition 1}
\newtheorem*{proposition_2}{Proposition 2}
\newtheorem{lemma}{Lemma}
\newenvironment{shrinkeq}[1]
{\bgroup
  \addtolength\abovedisplayshortskip{#1}
  \addtolength\abovedisplayskip{#1}
  \addtolength\belowdisplayshortskip{#1}
  \addtolength\belowdisplayskip{#1}}
{\egroup\ignorespacesafterend}
\title{Meta-Semi: A Meta-learning Approach for Semi-supervised Learning}
\author{%
  Yulin Wang, Jiayi Guo, Shiji Song, Gao Huang\thanks{Corresponding author.}\\
  Department of Automation, Tsinghua University, Beijing, China\\
  Beijing National Research Center for Information Science and Technology (BNRist)\\
  \texttt{wang-yl19@mails.tsinghua.edu.cn, guojy821@gmail.com} \\
  \texttt{\{shijis, gaohuang\}@tsinghua.edu.cn}
}
\begin{document}

\maketitle

\vskip -0.15in
\begin{abstract}
  Deep learning based semi-supervised learning (SSL) algorithms have led to promising results in recent years. However, they tend to introduce multiple tunable hyper-parameters, making them less practical in real SSL scenarios where the labeled data is scarce for extensive hyper-parameter search. In this paper, we propose a novel meta-learning based SSL algorithm (\textit{Meta-Semi}) that requires tuning only one additional hyper-parameter, compared with a standard supervised deep learning algorithm, to achieve competitive performance under various conditions of SSL. We start by defining a meta optimization problem that minimizes the loss on labeled data through dynamically reweighting the loss on unlabeled samples, which are associated with soft pseudo labels during training. As the meta problem is computationally intensive to solve directly, we propose an efficient algorithm to dynamically obtain the approximate solutions. We show theoretically that \textit{Meta-Semi} converges to the stationary point of the loss function on labeled data under mild conditions. Empirically, \textit{Meta-Semi} outperforms state-of-the-art SSL algorithms significantly on the challenging semi-supervised CIFAR-100 and STL-10 tasks, and achieves competitive performance on CIFAR-10 and SVHN.\footnote[1]{This work has been submitted to the IEEE for possible publication. Copyright may be transferred without notice, after which this version may no longer be accessible.}
\end{abstract}

\vspace{-2ex}
\section{Introduction}
\vspace{-2ex}
The recent success of deep learning in supervised tasks is fueled by abundant annotated training data \cite{krizhevsky2012imagenet, simonyan2014very, szegedy2015going, lecun2015deep, He_2016_CVPR, huang2019convolutional}. However, collecting precise labels in practice is usually very time-consuming and costly. In many real-world applications, only a small subset of all available training data are associated with labels \cite{oliver2018realistic, verma2019interpolation}.
Semi-supervised learning (SSL) is a learning paradigm that aims to improve the model performance by simultaneously leveraging labeled and unlabeled data \cite{zhu2003semi, SSL_book, turian2010word}.



In the context of deep learning, many successful SSL methods incorporate unlabeled data by performing unsupervised consistency regularization \cite{laine2016temporal, tarvainen2017mean, miyato2018virtual, verma2019interpolation, berthelot2019mixmatch}. In specific, they first add small perturbations to the unlabeled samples, and then enforce the consistency between the model predictions on the original data and the perturbed data. Though impressive performance has been achieved, the state-of-the-art consistency based algorithms tend to introduce multiple tunable hyper-parameters. The final performance of the algorithms is usually conditioned on setting proper values for these hyper-parameters. However, in real semi-supervised learning scenarios, hyper-parameter searching is usually unreliable as the annotated data are scarce, leading to high variance when cross-validation is adopted~\cite{oliver2018realistic}. This problem will become even more serious if the performance of the algorithm is sensitive to the hyper-parameter values. Furthermore, since the searching space grows exponentially with respect to the number of hyper-parameters \cite{bergstra2012random}, the computational cost may become unaffordable for modern deep learning algorithms.

Another challenge to develop practical and robust deep SSL algorithms is how to exploit the \emph{labeled data} more efficiently, as these data, although being scarce, have the precise and reliable annotations. Consistency based SSL algorithms \cite{laine2016temporal, tarvainen2017mean, miyato2018virtual, verma2019interpolation, berthelot2019mixmatch} usually model the labeled and unlabeled data in separate terms in the loss function, where the unlabeled data receives no supervision, at least explicitly, from the former, leading to an inefficient use of the labeled data.

In this paper, we propose a meta-learning based SSL algorithm, named \textit{Meta-Semi}, to efficiently exploit the labeled data, while requiring tuning only one additional hyper-parameter to achieve impressive performance under various conditions. The proposed algorithm is based on a simple intuition: \textit{if the network is trained with correctly ``pseudo-labeled'' unannotated samples, the final loss on labeled data should be minimized}. 
To be specific, we start by explicitly defining a meta reweighting objective: finding the optimal weights\footnote{Throughout the paper, the term ``weights'' always refer to the coefficients that we use to reweight each individual unlabeled sample, instead of referring to the parameters of neural networks.} for different pseudo-labeled samples to train a network, such that the final loss on labeled data is minimized. Note that the problem is computationally intensive to be directly solved via optimization algorithms. Therefore, we propose an approximated formulation, based on which a closed form solution can be obtained. We show theoretically that one meta gradient step is sufficient to obtain the approximate solutions at each training iteration. Finally, we propose a dynamical weighting algorithm to reweight pseudo-labeled samples with 0-1 weights. Theoretical analysis shows that our method converges to the stationary point of the supervised loss function.

Our algorithm is empirically validated on widely used image classification benchmarks (CIFAR-10, CIFAR-100, SVHN and STL-10) with modern deep networks (e.g., CNN-13 and WRN-28). 
\textit{Meta-Semi} outperforms state-of-the-art SSL algorithms, including ICT \cite{verma2019interpolation} and MixMatch \cite{berthelot2019mixmatch}, on the challenging CIFAR-100 and STL-10 SSL tasks significantly, while achieves slightly better performance than them on CIFAR-10. 
Besides, \textit{Meta-Semi} is complementary to consistency based methods, i.e., performing consistency regularization in our algorithm further improves the performance. Moreover, sensitivity test on the only tunable hyper-parameter of \emph{Meta-Semi} shows that the algorithm is quite robust to different hyper-parameter values.

\vspace{-2ex}
\section{Related Work}
\vspace{-2ex}
\textbf{Consistency based semi-supervised learning} has been extensively studied in the context of deep learning in recent years \cite{sajjadi2016regularization, laine2016temporal, tarvainen2017mean, miyato2018virtual, verma2019interpolation}. These methods leverage unlabeled data by adding an unsupervised regularization term to the standard supervised loss: $\mathcal{L}_{S} + w\mathcal{L}_{US}$, where $\mathcal{L}_{S}$ is the conventional loss on labeled data, $\mathcal{L}_{US}$ is the loss contributed by unlabeled data which is usually defined as a measure of discrepancy between the model predictions on the original unlabeled samples and their perturbed counterparts, and $w$ is a pre-defined coefficient. Existing approaches have proposed different ways to generate the perturbations for $\mathcal{L}_{US}$, including data augmentation \cite{bachman2014learning, laine2016temporal, sajjadi2016regularization}, adversarial noise \cite{miyato2018virtual}, Dropout \cite{park2018adversarial}, data interpolation \cite{verma2019interpolation}, etc. To enhance the model stability, an exponential moving average (EMA) on parameters or predictions is often adopted \cite{laine2016temporal, tarvainen2017mean}. The effectiveness of these approaches is conditioned on the proper setting of the coefficient $w$. As the recent methods \cite{berthelot2019mixmatch, berthelot2019remixmatch} usually integrate multiple regularization techniques, finding the proper hyperparameter setting becomes a challenging problem in practice, especially in the SSL scenarios where few samples are available for performing cross-validation.

\textbf{Other semi-supervised learning algorithms.}
Early work on SSL can be categorized into cluster assumption based methods \cite{joachims2003transductive,joachims1999transductive} and graph assumption based methods \cite{zhu2003semi, bengio200611}. For deep learning based SSL, \cite{kingma2014semi, odena2016semi} propose to train deep generators using both the labeled and unlabeled data to estimate the data distribution. Pseudo label based method \cite{lee2013pseudo} is also widely used in deep SSL. It progressively uses the highly confident model predictions to generate pseudo labels for unlabeled samples during training. Minimizing the entropy of the model prediction on unlabeled data is also proven effective for SSL \cite{grandvalet2005semi, miyato2018virtual}.

\textbf{Meta learning.}
Since \textit{Meta-Semi} follows a meta-learning paradigm, we briefly review the existing work on this topic. The idea of meta-learning is motivated by the goal of `learning to learn better' \cite{lake2017building, andrychowicz2016learning}. Meta-learning algorithms usually define a meta optimization problem to extract information from the learning process.
For example, using the loss on a small amount of trustable data as the meta-objective is widely adopted in few-shot learning \cite{ravi2016optimization, ren2018meta}. MAML \cite{finn2017model} proposes to minimize the meta loss directly via gradient descents.
To address the challenge that naively minimizing the meta objective requires performing multiple meta update steps iteratively for every `real' update step on model parameters, \cite{ren2018learning} propose an online approximation method to make the meta training process more tractable. The proposed algorithm is similar to that in \cite{ren2018learning}, but our contributions lie in several important aspects. First, we propose to exploit the labeled data more efficiently in SSL by leveraging the meta-reweighting method, which not only reduces the required number of tunable hyper-parameters, but also effectively improves the performance. As far as we know, this idea has not been explored in the literature. Second, we propose a novel dynamical re-weighting process that is tailored for SSL. This is non-trivial since directly applying the method in \cite{ren2018learning} to SSL leads to inferior results (see: Table 1). Third, we provide a theoretical convergence analysis in the context of SSL, which utilizes different proof techniques from \cite{ren2018learning}.

\vspace{-2ex}
\section{Method}
\vspace{-2ex}
In this section, we introduce the details of our \textit{Meta-Semi} algorithm.
Different from most existing methods that leverage unsupervised consistency regularization, we propose to solve the SSL problem in a meta-learning paradigm. 
As an overview, we first compute the cross-entropy loss of unlabeled samples using their corresponding pseudo labels. Then we reweight the loss on each unlabeled sample by solving a meta optimization problem that minimizes the supervised loss of labeled samples.
As directly solving the meta problem is computationally intractable, we propose an approximation method to dynamically obtain the 0-1 approximate solutions, which only requires one meta gradient descent step. In addition, theoretical guarantees are provided to show that our method converges to the stationary point of the supervised loss.





\vspace{-2ex}
\subsection{Meta Optimization Problem}
\vspace{-2ex}
\label{sec:Overview}
We start by presenting the weighted loss function of our method, and defining a meta optimization problem to determine the value of the weight for each unlabeled sample.


Suppose that the networks are trained with stochastic gradient descent (SGD). 
At each iteration, we sample a mini-batch of labeled samples $\mathcal{X}\!=\!\{({\bm{x}}_i, {\bm{y}}_i)\}$ together with a mini-batch of unlabeled samples $\mathcal{U}\!=\!\{({\bm{u}}_j, \hat{\bm{y}}_j)\}$, where $\bm{x}_i$ and $\bm{y}_i$ represent the $i^{\text{th}}$ labeled sample and its associated ground truth label, respectively, and $\bm{u}_j$ and $\hat{\bm{y}}_j$ represent the $j^{\text{th}}$ unlabeled sample and its pseudo label, respectively. 
Following earlier work \cite{verma2019interpolation, berthelot2019mixmatch}, we use the MixUp augmentation \cite{zhang2017mixup} to generate a mixed version of the inputs to improve the generalization performance, instead of directly using $\mathcal{X}$ and $\mathcal{U}$. The augmented mini-batch of training samples are denoted by $\tilde{\mathcal{X}}\!=\!\{(\tilde{\bm{x}}_i, \tilde{\bm{y}}_i)\}$ and $\tilde{\mathcal{U}}\!=\!\{(\tilde{\bm{u}}_j, \hat{\bm{y}}_j)\}$. 
We defer the details on generating pseudo labels and obtaining $\tilde{\mathcal{X}}$ and $\tilde{\mathcal{U}}$ to Section \ref{sec:Implementation Details}.

Consider training a deep network with parameters $\bm{\theta}$. 
We first feed an unlabeled sample $\tilde{\bm{u}}_j$ into the network, producing its prediction $p(\tilde{\bm{u}}_j|\bm{\theta})$. Then we calculate the cross-entropy loss $L(\hat{\bm{y}}_j, p(\tilde{\bm{u}}_j|\bm{\theta}))$ using the corresponding soft pseudo label $\hat{\bm{y}}_j$. The loss of this sample is further reweighed by $w_j^*\in[0,1]$ to construct the final loss function
\begin{shrinkeq}{-0.5ex}
    \begin{equation}
        \mathcal{L}_{meta} = \frac{1}{\sum_{j=1}^{|\tilde{\mathcal{U}}|}w_j^*} \sum\nolimits_{j=1}^{|\tilde{\mathcal{U}}|} w_j^* L(\hat{\bm{y}}_j, p(\tilde{\bm{u}}_j|\bm{\theta})).
    \end{equation}
\end{shrinkeq}
Without loss of generality, we assume $\mathcal{L}_{meta} = 0$ when $\sum_{j=1}^{|\tilde{\mathcal{U}}|}w_j^* = 0$.
The weight scalar $w_j^*$ is determined by minimizing the meta loss on the labeled data. To illustrate that, we first consider training the network with a similar weighted loss
\begin{shrinkeq}{-0.5ex}
    \begin{equation}
    \label{eq:meta_objective_1}
    \bm{\theta}^{*}(\bm{w}) = \mathop{\arg\min}_{\bm{\theta}} \sum\nolimits_{j=1}^{|\tilde{\mathcal{U}}|} w_j L(\hat{\bm{y}}_j, p(\tilde{\bm{u}}_j|\bm{\theta})),
\end{equation}
\end{shrinkeq}
where $\bm{\theta}^{*}(\bm{w})$ is the optimal solution that minimizes the weighted loss. Obviously, it is a function of the weight vector $\bm{w} = [w_1, w_2, \ldots]^T$. Then the weights $\bm{w}^*$ is solved by minimizing the loss on labeled data $\tilde{\mathcal{X}}$ with $\bm{\theta}^{*}(\bm{w})$, namely
\begin{shrinkeq}{-0.5ex}
    \begin{equation}
    \label{eq:meta_objective_2}
    \bm{w}^{*} = \mathop{\arg\min}_{w_j\in[0,1], j=1,\ldots, |\tilde{\mathcal{U}}|}\sum\nolimits_{i=1}^{|\tilde{\mathcal{X}}|}L(\tilde{\bm{y}}_i, p(\tilde{\bm{x}}_i|\bm{\theta}^{*}(\bm{w}))).
\end{equation}
\end{shrinkeq}
Intuitively, our aim is to find a subset of pseudo-labeled samples, which, if used for training, are the most beneficial in terms of the generalization performance. The labeled data are leveraged to determine if each pseudo-labeled sample should be used, instead of directly being used for train as most existing SSL algorithms do \cite{laine2016temporal, tarvainen2017mean, miyato2018virtual, verma2019interpolation, berthelot2019mixmatch}. We argue that this is a more effective approach to exploit the supervision information.

\vspace{-2ex}
\subsection{Approximating the Meta Solution}
\vspace{-2ex}
\label{sec:Approximating the Meta Objective}
To solve the meta optimization problem Eqs. (\ref{eq:meta_objective_1}) and (\ref{eq:meta_objective_2}) efficiently, we introduce a method to obtain an approximate solution. 


At $t^\text{th}$ step in the training process, consider estimating $\bm{\theta}^{*}(\bm{w})$ by performing $M$ times of gradient descents starting from current values of network parameters $\bm{\theta}^{t}$:
\begin{shrinkeq}{-1ex}
    \begin{equation}
    \label{eq:update_theta_1_M}
    \overline{\bm{\theta}}^t_M \approx\bm{\theta}^{*}(\bm{w}),\ \ \overline{\bm{\theta}}^t_0=\bm{\theta}^{t},
\end{equation}
\end{shrinkeq} 
\begin{shrinkeq}{-1.5ex}
    \begin{equation}
        \label{eq:update_theta}
    \overline{\bm{\theta}}^t_{m+1} = \overline{\bm{\theta}}^t_{m} - \alpha^t \!\! \left[ 
        \frac{\partial\sum_{j=1}^{|\tilde{\mathcal{U}}|} w_j L(\hat{\bm{y}}_j, p(\tilde{\bm{u}}_j|\overline{\bm{\theta}}^t_{m}))}{\partial \overline{\bm{\theta}}^t_{m}} 
    \right], m=0,1, \ldots ,M-1,
\end{equation}
\end{shrinkeq} 
where $\alpha^t\!$ is the learning rate. As SGD has proven to be effective for optimizing deep networks, $\overline{\bm{\theta}}^t_M$ is a reliable alternate of $\bm{\theta}^{*}(\bm{w})$ as long as $M$ is sufficiently large.
\vskip 0.03in

Given that $\bm{\theta}^{*}(\bm{w})$ can be estimated by $\overline{\bm{\theta}}^t_M$, a naive method of approximating $\bm{w}^{*}$ is to further estimate the gradient $\nabla_{\!\bm{w}} \!\sum_{i=1}^{|\tilde{\mathcal{X}}|}\!L(\tilde{\bm{y}}_i, p(\tilde{\bm{x}}_i|\bm{\theta}^{*}(\bm{w})))$ with $\overline{\bm{\theta}}^t_M$, and then repeatedly update $\bm{w}$ following similar gradient based optimization algorithms. However, it is computationally intensive to do that since updating $\bm{w}$ for $N$ times requires $MN$ steps of gradient descents on the network parameters. To get a efficient estimate of $\bm{w}^{*}$, we propose a dynamic approximation approach in the following. 
\vskip 0.07in


First, to reduce the iterations of updating $\bm{w}$, we exploit a first order Taylor approximation of Eq. (\ref{eq:meta_objective_2}) at $\bm{w}=0$:
\begin{shrinkeq}{-1ex}
    \begin{equation}
    \label{eq:meta_objective_2_est}
    \bm{w}^{*}\!\!\approx\!\!\!\mathop{\arg\min}_{w_j\in[0,1], j=1,\ldots, |\tilde{\mathcal{U}}|}\!\!\!\bm{w}^T \!\!
    \left[
        \left.
        \!\frac{\partial\sum_{i=1}^{|\tilde{\mathcal{X}}|}L(\tilde{\bm{y}}_i, p(\tilde{\bm{x}}_i|\overline{\bm{\theta}}^t_M))}{\partial {\bm{w}}}
        \right|_{\bm{w}=0}
    \right]
        \!.
\end{equation}
\end{shrinkeq}
Notably, $\overline{\bm{\theta}}^t_M$ is obtained using the gradients of the weighted loss according to Eq. (\ref{eq:update_theta}), and thus it is differentiable with respect to $w_j$. 
As the optimization objective in Eq. (\ref{eq:meta_objective_2_est}) is linear, it is straightforward to derive the solution:
\begin{shrinkeq}{-0.6ex}
    \begin{equation}
    \label{eq:taylor_est}
    w_j^* \approx w_j^t =
    \begin{cases}
        1
        & 
        \left. \frac{\partial\sum_{i=1}^{|\tilde{\mathcal{X}}|}L(\tilde{\bm{y}}_i, p(\tilde{\bm{x}}_i|\overline{\bm{\theta}}^t_M))}{\partial w_j}\right|_{\bm{w}=0} \leq 0 \vspace{1ex} \\
        0
        & 
        \left.\frac{\partial\sum_{i=1}^{|\tilde{\mathcal{X}}|}L(\tilde{\bm{y}}_i, p(\tilde{\bm{x}}_i|\overline{\bm{\theta}}^t_M))}{\partial w_j}\right|_{\bm{w}=0} > 0
    \end{cases},
\end{equation}
\end{shrinkeq}
where $w_j^t$ denotes the approximate solution of $w_j^*$. The required steps of gradient descents are reduced to $M$ from $MN$ by leveraging Eq. (\ref{eq:taylor_est}). However, the algorithm is still inefficient since a large $M$ is necessary to get a sufficiently accurate $\overline{\bm{\theta}}^t_M$. To further reduce the computational cost, an intriguing property can be leveraged. In the following proposition, we show that the results of Eq. (\ref{eq:taylor_est}) will remain the same if $\overline{\bm{\theta}}^t_M$ in the equation is replaced by $\overline{\bm{\theta}}^t_1$. In other words, Eq. (\ref{eq:taylor_est}) can be precisely solved using $\overline{\bm{\theta}}^t_1$ instead of $\overline{\bm{\theta}}^t_M$, and the former only needs one gradient descent step to obtain.
\vskip 0.12in

\begin{proposition}
    \label{prop:T_1}
    Suppose that $\overline{\bm{\theta}}^t_M$ is given by $M$ steps of gradient descents starting from $\overline{\bm{\theta}}^t_0=\bm{\theta}^{t}$. Then we have
    \begin{shrinkeq}{-1ex}
    \begin{equation}
            \left. \frac{\partial\sum_{i=1}^{|\tilde{\mathcal{X}}|}L(\tilde{\bm{y}}_i, p(\tilde{\bm{x}}_i|\overline{\bm{\theta}}^t_M))}{\partial w_j}\right|_{\bm{w}=0} 
        =  \left. M \left[  \frac{\partial\sum_{i=1}^{|\tilde{\mathcal{X}}|}L(\tilde{\bm{y}}_i, p(\tilde{\bm{x}}_i|\overline{\bm{\theta}}^t_1))}{\partial w_j}\right|_{\bm{w}=0}  \right]\!\!,\ \forall\  1\!\leq\!j\!\leq |\tilde{\mathcal{U}}|.
    \end{equation}
\end{shrinkeq}
\end{proposition}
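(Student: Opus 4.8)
The plan is to compute the derivative $\partial \overline{\bm{\theta}}^t_m/\partial w_j$ at $\bm{w}=0$ explicitly by differentiating the gradient-descent recursion Eq.~(\ref{eq:update_theta}), show that it scales linearly in the step index $m$, and then read off the claimed factor $M$ from a single application of the chain rule.

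First I would fix notation. Write $G(\bm{\theta}) = \sum_{i=1}^{|\tilde{\mathcal{X}}|} L(\tilde{\bm{y}}_i, p(\tilde{\bm{x}}_i|\bm{\theta}))$ for the labeled loss and $\bm{g}_k(\bm{\theta}) = \partial L(\hat{\bm{y}}_k, p(\tilde{\bm{u}}_k|\bm{\theta}))/\partial\bm{\theta}$ for the per-sample unlabeled gradient, so that the update reads $\overline{\bm{\theta}}^t_{m+1} = \overline{\bm{\theta}}^t_m - \alpha^t \sum_{k} w_k\, \bm{g}_k(\overline{\bm{\theta}}^t_m)$. The crucial preliminary observation is that at $\bm{w}=0$ the gradient step is a no-op, so every iterate collapses to the starting point: $\overline{\bm{\theta}}^t_m\big|_{\bm{w}=0} = \bm{\theta}^t$ for all $m$.

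Next I would differentiate the recursion with respect to $w_j$. Letting $\bm{d}_m := \partial \overline{\bm{\theta}}^t_m/\partial w_j$, differentiation gives $\bm{d}_{m+1} = \bm{d}_m - \alpha^t\big[\bm{g}_j(\overline{\bm{\theta}}^t_m) + \sum_k w_k\, \nabla_{\bm{\theta}}\bm{g}_k(\overline{\bm{\theta}}^t_m)\,\bm{d}_m\big]$, where the bracketed sum collects the Hessian contributions arising from the dependence of $\overline{\bm{\theta}}^t_m$ on $\bm{w}$. Evaluating at $\bm{w}=0$, the entire Hessian term vanishes because it is multiplied by $w_k$, and using $\overline{\bm{\theta}}^t_m\big|_{\bm{w}=0}=\bm{\theta}^t$ the recursion linearizes to $\bm{d}_{m+1} = \bm{d}_m - \alpha^t\, \bm{g}_j(\bm{\theta}^t)$ with $\bm{d}_0 = 0$ (the initial point $\bm{\theta}^t$ does not depend on $\bm{w}$). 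This solves explicitly to $\bm{d}_m\big|_{\bm{w}=0} = -m\,\alpha^t\, \bm{g}_j(\bm{\theta}^t)$, so in particular $\bm{d}_M\big|_{\bm{w}=0} = M\, \bm{d}_1\big|_{\bm{w}=0}$.

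Finally, the chain rule gives $\partial G(\overline{\bm{\theta}}^t_m)/\partial w_j = \nabla_{\bm{\theta}} G(\overline{\bm{\theta}}^t_m)^T \bm{d}_m$; evaluated at $\bm{w}=0$, the factor $\nabla_{\bm{\theta}} G$ is taken at $\overline{\bm{\theta}}^t_m\big|_{\bm{w}=0}=\bm{\theta}^t$ for both $m=M$ and $m=1$, so it is the \emph{same} vector in the two cases. Combining this with $\bm{d}_M=M\,\bm{d}_1$ yields the stated equality. I expect the only delicate point to be the vanishing of the second-order terms at $\bm{w}=0$: this is precisely what makes the identity exact rather than a first-order approximation, and it is worth noting that $L$ need only be twice differentiable in $\bm{\theta}$ so that these Hessian terms are well defined before they are discarded.
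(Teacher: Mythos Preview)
Your proposal is correct and follows essentially the same approach as the paper: both arguments differentiate the gradient-descent recursion with respect to $w_j$, observe that at $\bm{w}=0$ the Hessian terms drop out and every iterate collapses to $\bm{\theta}^t$, and then unroll the resulting linear recursion to extract the factor $M$. Your presentation is in fact slightly cleaner in isolating $\bm{d}_m$ and solving the recursion $\bm{d}_{m+1}=\bm{d}_m-\alpha^t\bm{g}_j(\bm{\theta}^t)$ explicitly, whereas the paper carries out the same computation in one long chain of equalities, but the mathematical content is identical.
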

\begin{proof}
    \vskip -0.1in
    See Appendix A.
\end{proof}
\vskip -0.15in
With Proposition \ref{prop:T_1}, we are ready to present the final form of our dynamically reweighting formula:
\begin{shrinkeq}{-0.8ex}
    \begin{equation}
    \label{eq:approximation_formula}
    w_j^t =
    \begin{cases}
        1
        & 
        \left. \frac{\partial\sum_{i=1}^{|\tilde{\mathcal{X}}|}L(\tilde{\bm{y}}_i, p(\tilde{\bm{x}}_i|\overline{\bm{\theta}}^t_1))}{\partial w_j}\right|_{\bm{w}=0} \leq 0 \vspace{1ex} \\
        0
        & 
        \left.\frac{\partial\sum_{i=1}^{|\tilde{\mathcal{X}}|}L(\tilde{\bm{y}}_i, p(\tilde{\bm{x}}_i|\overline{\bm{\theta}}^t_1))}{\partial w_j}\right|_{\bm{w}=0} > 0
    \end{cases}.
\end{equation}
\end{shrinkeq}
As we leverage a meta learning approach to reweight different pseudo-labeled samples, we call our method \textit{Meta-Semi}. The pseudo code of \textit{Meta-Semi} is presented in Algorithm \ref{alg}. In summary, after each standard forward step of the pseudo-labeled samples, we first update the parameters with the loss of all samples weighted by zero. Such a meta updating step does not change the values of parameters, but construct a differentiable computational graph. Then we calculate the supervised loss on labeled data, and exploit the computational graph to take the derivative of the supervised loss with respect to the zero weight, which is called ``meta gradient''. Finally, we only use the pseudo-labeled samples with negative meta gradients to train the network. 


\begin{figure}
    \begin{minipage}{0.48\columnwidth}
        \footnotesize
        \vspace{-2.5ex}
        \begin{center}
    \begin{algorithm}[H]
        \algsetup{linenosize=\tiny} \footnotesize
        \caption{\footnotesize The Meta-Semi Algorithm.}
        \label{alg}
    \begin{algorithmic}[1]
        \STATE {\bfseries Initialize:} $\bm{\theta}^{0}$
        \FOR{$t=1$ {\bfseries to} $T$}
        \STATE Randomly sample ${\mathcal{X}}$, ${\mathcal{U}}$
        \STATE Generate $\tilde{\mathcal{X}}$, $\tilde{\mathcal{U}}$
        \STATE Compute $p(\tilde{\bm{u}}_j|\bm{\theta}^t)$,\ $\tilde{\bm{u}}_j\!\in\tilde{\mathcal{U}}$
        \STATE ${\bm{w}} \leftarrow 0$,  $ \overline{\bm{\theta}}^t_0 \leftarrow \bm{\theta}^{t}$
        \STATE $\nabla_{\overline{\bm{\theta}}^t_0} \leftarrow 
        \frac{\partial\sum_{j=1}^{|\tilde{\mathcal{U}}|} {w}_j L(\hat{\bm{y}}_j, p(\tilde{\bm{u}}_j|\bm{\theta}^t))}{\partial \bm{\theta}^t}$
        \STATE $\overline{\bm{\theta}}^t_{1} \leftarrow \overline{\bm{\theta}}^t_{0} - \alpha^t \nabla_{\overline{\bm{\theta}}^t_0}$
        \STATE Compute $p(\tilde{\bm{x}}_i|\overline{\bm{\theta}}^t_{1})$,\ $\tilde{\bm{x}}_i\!\in\tilde{\mathcal{X}}$
        \STATE {\bfseries Meta Gradient:} $\nabla_{{\bm{w}}}^t\!\!\leftarrow\!\!
        \left. \frac{\partial\sum_{i=1}^{|\tilde{\mathcal{X}}|}L(\tilde{\bm{y}}_i, p(\tilde{\bm{x}}_i|\overline{\bm{\theta}}^t_1))}{\partial {\bm{w}}}\right.$
        \STATE $\bm{w}^t \leftarrow sign(max(-\nabla_{{\bm{w}}}^t, 0))$\ \ \ (Eq. (\ref{eq:approximation_formula}))
        \STATE $\mathcal{L}_{meta}\!\!\leftarrow\!\!\frac{1}{\sum_{j=1}^{|\tilde{\mathcal{U}}|}w_j^t} \sum_{j=1}^{|\tilde{\mathcal{U}}|} w_j^t L(\hat{\bm{y}}_j, p(\tilde{\bm{u}}_j|\bm{\theta^{t}}))$ \vspace{0.5ex}
        \STATE $\bm{\theta}^{(t+1)} \leftarrow \bm{\theta}^{t} - \alpha^t \frac{\partial \mathcal{L}_{meta}}{\partial \bm{\theta}^t}$
        \ENDFOR
    \end{algorithmic}
    \end{algorithm}
    \end{center}
    \end{minipage}
    \hspace{0.05in}
    \begin{minipage}{0.5\columnwidth}
        \centering
        \includegraphics[width=\columnwidth]{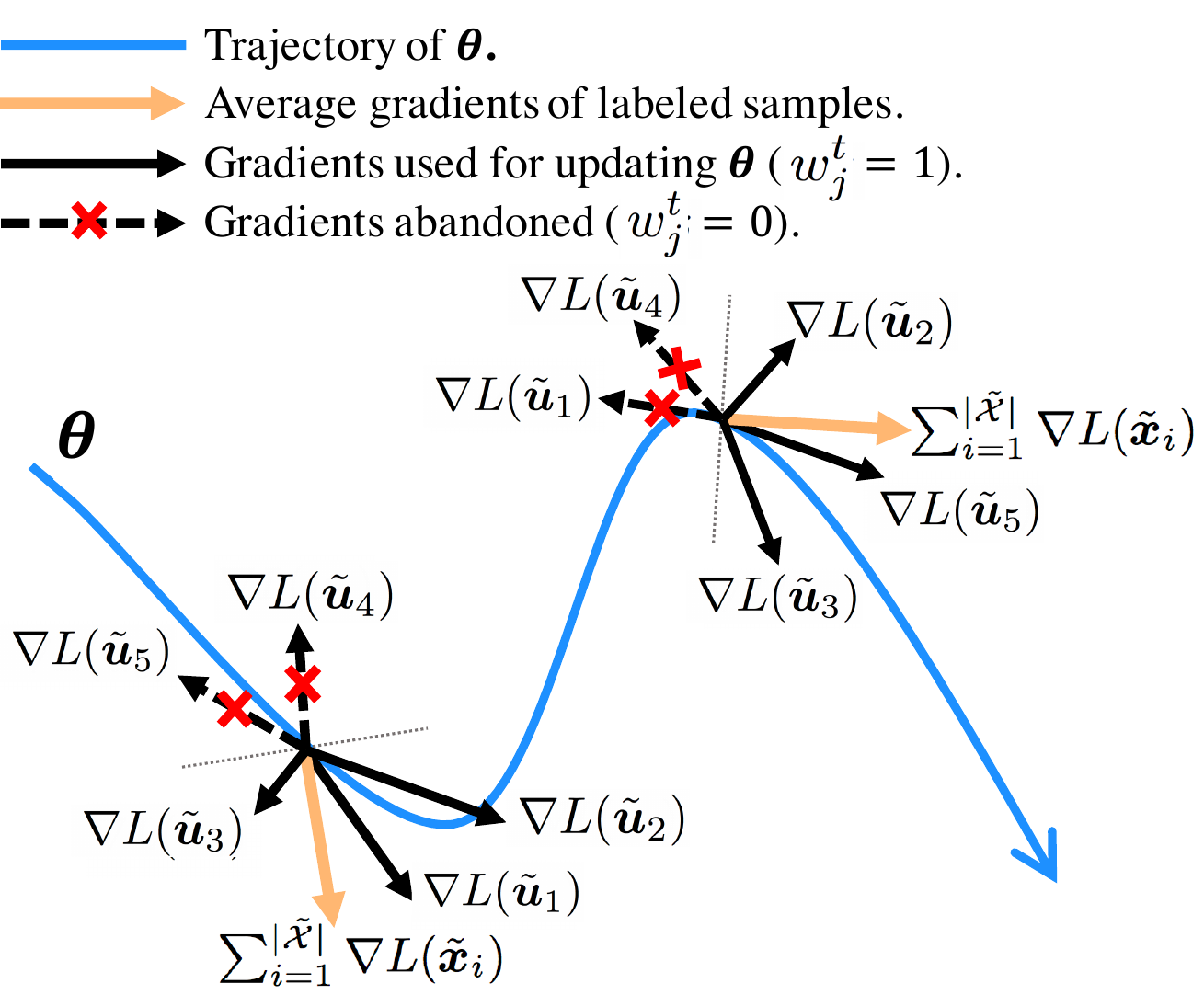}	
        \vskip -0.1in
        \caption{Illustration of \textit{Meta-Semi}. Herein, $\nabla L(\tilde{\bm{u}}_j)$ and $\nabla L(\tilde{\bm{x}}_i)$ denote $\nabla_{\!\bm{\theta}^t}{L(\hat{\bm{y}}_j, p(\tilde{\bm{u}}_j|\bm{\theta}^t))}$ and $\nabla_{\!\bm{\theta}^t}{L(\tilde{\bm{y}}_i, p(\tilde{\bm{x}}_i|\bm{\theta}^t))}$, respectively. Our method trains the networks with pseudo-labeled samples whose gradient directions are similar to the average gradient of labeled samples. }
        \label{fig:illustration}
        \vskip -0.1in  
    \end{minipage}
    \vskip -0.2in
\end{figure}

\textbf{Interpretation of meta gradients. }
A straightforward way to interpret the meta gradients is that it can be viewed as the influence on the supervised loss when the weight of certain pseudo-labeled sample changes slightly around zero during training. In fact, there exists a more intriguing and interesting interpretation. The meta gradients given in Eq. (\ref{eq:approximation_formula}) can be expressed as
\begin{shrinkeq}{-0.2ex}
    \begin{equation}
    \label{eq:interpretion}
    \begin{split}
         \left. \frac{\partial\sum_{i=1}^{|\tilde{\mathcal{X}}|}L(\tilde{\bm{y}}_i, p(\tilde{\bm{x}}_i|\overline{\bm{\theta}}^t_1))}{\partial w_j}\right|_{\bm{w}=0}
        \!= &\! \left[\!\left.\frac{\partial \!\sum_{i=1}^{|\tilde{\mathcal{X}}|} \!L(\tilde{\bm{y}}_i, p(\tilde{\bm{x}}_i|\overline{\bm{\theta}}^t_1))}{\partial \overline{\bm{\theta}}^t_1} \! \right]^{\!T} \! \!
        \left[ \!\frac{\partial (\overline{\bm{\theta}}^t_0\!-\!\alpha^t \nabla_{\overline{\bm{\theta}}^t_0} ) }{ \partial w_j} \! \right] \right|_{\bm{w}=0} \\
        \!= &\!  -\! \alpha^t \!\! \left[ \!\frac{\partial \!\sum_{i=1}^{|\tilde{\mathcal{X}}|}\!L(\tilde{\bm{y}}_i, p(\tilde{\bm{x}}_i|\bm{\theta}^t))}{\partial \bm{\theta}^t} \! \right]^{\!T} \! \!
        \left[ \!\frac{\partial \!L(\hat{\bm{y}}_j, p(\tilde{\bm{u}}_j|\bm{\theta}^t))}{\partial \bm{\theta}^t} \!
            \right]\!,
    \end{split}
\end{equation}
\end{shrinkeq} 
which follows from $\nabla_{\overline{\bm{\theta}}^t_0}\!\!=\!\!\sum_{k=1}^{|\tilde{\mathcal{U}}|}\! \!w_k\!\frac{\partial \!L(\bm{\tilde{y}}_k, p(\tilde{\bm{u}}_k|\overline{\bm{\theta}}^t_0))}{\partial \overline{\bm{\theta}}^t_0} \!$  and $\overline{\bm{\theta}}^t_1 = \overline{\bm{\theta}}^t_0 = \bm{\theta}^t$. For the pseudo-unlabeled sample $(\tilde{\bm{u}}_j, \hat{\bm{y}}_j)$, its meta gradient is negatively proportional to the inner product of the average gradient of labeled samples and the gradient produced by itself. In other words, the sign of the meta gradient indicates whether the angle between the former and the later is larger than 90 degrees. Intuitively, if the pseudo label is correct, the corresponding gradient should guide the model towards a similar direction to the labeled samples, or at least should not be largely different from the supervised gradient  in direction. In essence, \textit{Meta-Semi} trains deep networks using pseudo-labeled samples whose gradient directions are similar to labeled samples. An illustration is shown in Figure \ref{fig:illustration}.
 
\vspace{-2ex}
\subsection{Implementation Details}
\vspace{-2ex}
\label{sec:Implementation Details}
\textbf{Pseudo labels.} 
To obtain high quality pseudo labels for the original unlabeled mini-batch $\mathcal{U}$, we first apply an exponential moving average (EMA) on model parameters, which has proven to be effective in providing supervision on unlabeled data \cite{tarvainen2017mean, verma2019interpolation}. Then we feed every unlabeled sample $\bm{u}_j$ in $\mathcal{U}$ into the EMA model, and take the corresponding softmax prediction as the soft pseudo label $\hat{\bm{y}}_j$.

\textbf{MixUp augmentation }
is an important regularization technique used by state-of-the-art deep SSL algorithms \cite{verma2019interpolation, berthelot2019mixmatch}. It improves the generalization performance of models by encouraging the `convex' behavior between different samples. Given a pair of samples with corresponding annotations ($\bm{x}_{1}$, $\bm{y}_{1}$) and ($\bm{x}_{2}$, $\bm{y}_{2}$), MixUp is performed to generate an augmented sample via linear interpolation:
\begin{shrinkeq}{-0ex}
\begin{align}
    \tilde{\bm{x}}=\lambda \bm{x}_{1}+(1-\lambda) \bm{x}_{2}, \ \ \ \ 
    \tilde{\bm{y}}=\lambda \bm{y}_{1}+(1-\lambda) \bm{y}_{2},
\end{align}
\end{shrinkeq}
where $\lambda$ is sampled from a pre-defined Beta distribution.
In \textit{Meta-Semi}, we leverage MixUp to generate the mixed training data $\tilde{\mathcal{X}}$ and $\tilde{\mathcal{U}}$. Formally, $\tilde{\mathcal{X}}$ is obtained from only the labeled set $\mathcal{X}$:
\begin{shrinkeq}{-0ex}
    \begin{equation}
    \label{eq:mixup_1}
    \tilde{\mathcal{X}} = \text{MixUp}(\mathcal{X}, \text{Shuffle}(\mathcal{X}), \lambda_1), \ \ \ \lambda_1\!\sim\!\text{Beta}(\beta, \beta),
\end{equation}
\end{shrinkeq}
where $\beta$ is the parameter of the Beta distribution, and it is the only tunable hyper-parameter (excluding the hyper-parameters of a supervised learning algorithm) in our algorithm. With regards to $\tilde{\mathcal{U}}$, we ideally want the unlabeled data to extract more information from the labeled samples. Therefore, we first concatenate $\mathcal{X}$ and $\mathcal{U}$ together, and then apply the MixUp procedure:
\begin{shrinkeq}{-0ex}
    \begin{equation}
    \label{eq:mixup_2}
    \tilde{\mathcal{U}} = \text{MixUp}({\mathcal{W}}, \text{Shuffle}({\mathcal{W}}), \lambda_2), \ \ \ 
    {\mathcal{W}} = \text{Concat}(\mathcal{X}, \mathcal{U}), \ \ \ \lambda_2\!\sim\!\text{Beta}(\beta, \beta),
\end{equation}
\end{shrinkeq}
where the one-hot ground truth labels are used for $\mathcal{X}$ and the soft pseudo labels are used for $\mathcal{U}$.

\textbf{Compatibility with consistency based methods.}
As a matter of fact, \textit{Meta-Semi} is compatible with existing consistency based algorithms, and they can be integrated when necessary. To see this, the regularization term can be simply appended to the loss function with an addition coefficient $w$:
\begin{shrinkeq}{-0.2ex}
    \begin{equation}
    \mathcal{L} = \mathcal{L}_{meta} + w \mathcal{L}_{consistency}.
\end{equation}
\end{shrinkeq}
In experiments, we show that although \textit{Meta-Semi} has already achieved state-of-the-art performance, its performance is still able to be significantly improved by integrating consistency regularization.

\begin{figure*}[t]
    \centering
    \includegraphics[width=\columnwidth]{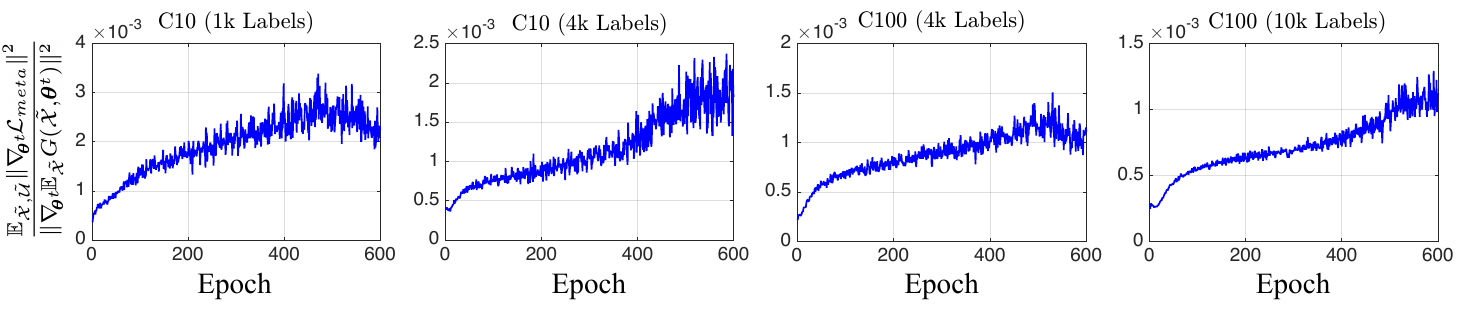}	
    \vskip -0.22in
    \caption{
        The empirical validation of Assumption \ref{assum:bound}. 
        The value of $\frac{\mathbb{E}_{\tilde{\mathcal{X}}, \tilde{\mathcal{U}}} \lVert  \nabla_{\!\!\bm{\theta}^t}\!\mathcal{L}_{meta} \lVert^2}{\lVert \nabla_{\!\!\bm{\theta}^t}\!\mathbb{E}_{\tilde{\mathcal{X}}} G(\tilde{\mathcal{X}}, \bm{\theta}^t) \lVert^2}$ is estimated at each training epoch using Monte-Carlo sampling with a sample size $500$. Results on CIFAR-10 (C10) and CIFAR-100 (C100) with varying numbers of labeled samples are presented. It can be observed that the ratio generally increases before the $500^{\text{th}}$ epoch, but gradually becomes stable or even decreases in the last part of the training process when the learning rate approaches 0. Therefore, it is empirically reasonable to assume that Assumption \ref{assum:bound} holds.
    }\label{fig:validation}
    \vskip -0.12in  
\end{figure*}

\vspace{-2ex}
\subsection{Convergence Analysis}
\vspace{-2ex}
\label{sec:Analysis of Convergence}

In this section, we show theoretically that under some mild conditions, our method converges to the stationary point of the loss on labeled data. The convergence results of SGD based optimization methods with a fixed loss function has been well-known \cite{reddi2016stochastic}. However, it is still necessary to provide the convergence analysis of \textit{Meta-Semi} since the optimization objective of our method is dynamically changed. To make it clear, we first define the supervised loss on the labeled mini-batch $\tilde{\mathcal{X}}$ by 
\begin{shrinkeq}{-0.7ex}
    \begin{equation}
    G(\tilde{\mathcal{X}}, \bm{\theta}^t) = \sum\nolimits_{i=1}^{|\tilde{\mathcal{X}}|} L(\tilde{\bm{y}}_i, p(\tilde{\bm{x}}_i|\bm{\theta}^t)).
\end{equation}
\end{shrinkeq}
Thus, the expected loss on all the labeled data is $\mathbb{E}_{\tilde{\mathcal{X}}} G(\tilde{\mathcal{X}}, \bm{\theta}^t).$
Then we introduce the definition of Lipschitz-smooth and a mild assumption stating that the expected norm of gradients used for updating model parameters will not get too large compared with the gradient of the overall supervised loss.

\begin{definition}
    A function $f\!:\! \mathbb{R}^n\!\!\!\to\!\!\mathbb{R}$ is said to be Lipschitz-smooth with constant $L$ if
\begin{align*}
\lVert \nabla f(x) - \nabla f(y) \rVert \leq L \lVert x - y \rVert,\ \  \forall x, y \in \mathbb{R}^n.
\end{align*}
\end{definition}
\begin{assumption}
    \label{assum:bound}
    For all $t \geq 0$, there exists a positive scalar $\sigma$, such that
    \begin{align*}
        \mathbb{E}_{\tilde{\mathcal{X}}, \tilde{\mathcal{U}}} \lVert  \nabla_{\!\!\bm{\theta}^t} \mathcal{L}_{meta} \lVert^2 \leq \sigma\lVert \nabla_{\!\!\bm{\theta}^t} \mathbb{E}_{\tilde{\mathcal{X}}} G(\tilde{\mathcal{X}}, \bm{\theta}^t) \lVert^2.
    \end{align*}
    \vskip -0.3in
\end{assumption}
\vskip -0.15in
In fact, the assumption is not very strong. Roughly, since $\mathcal{L}_{meta}$ is computed using the ground truth labels and the pseudo labels based on the prediction of the EMA model, it is usually very close to the minima of the loss function, especially when the networks tend to be stable with sufficiently large $t$. Empirically, we show that Assumption \ref{assum:bound} holds in many cases of SSL, which is shown in Figure \ref{fig:validation}. Under this condition, the following proposition shows that our method converges to the stationary point of the loss on labeled data with proper learning rate schedules.

\begin{proposition}
    \label{prop:convergence}
    Assume that the loss function on labeled data $G(\tilde{\mathcal{X}}, \bm{\theta}^t)$ is Lipschitz-smooth with regards to $\bm{\theta}^t$ for all $\tilde{\mathcal{X}}$, and that Assumption \ref{assum:bound} holds. Suppose also that the learning rate $\alpha^t > 0$ satisfies:
    \vspace{-1.2ex}
    \begin{shrinkeq}{0ex}
    \begin{equation}
        \lim_{t \to \infty} \alpha^t = 0, \ \ \ \ \sum_{t=0}^{\infty} \alpha^t = \infty.
    \end{equation}
\end{shrinkeq}
    \vskip -0.15in
    Then every limit point of the sequence $\{ \bm{\theta}^t \}$ generated by Meta-Semi is a stationary point of $\mathbb{E}_{\tilde{\mathcal{X}}} G(\tilde{\mathcal{X}}, \bm{\theta}^t)$, namely, 
    \vspace{-1ex}
    \begin{align*}
        \lim_{t \to \infty} \lVert \nabla_{\!\!\bm{\theta}^t} \mathbb{E}_{\tilde{\mathcal{X}}} G(\tilde{\mathcal{X}}, \bm{\theta}^t) \lVert = 0.
        \vspace{-2.5ex}
    \end{align*}
\end{proposition}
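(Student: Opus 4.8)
The plan is to treat Meta-Semi as a \emph{descent-direction} method on the population supervised loss $F(\bm{\theta}) = \mathbb{E}_{\tilde{\mathcal{X}}} G(\tilde{\mathcal{X}}, \bm{\theta})$ and to run a Lyapunov-style analysis with $F$ as the potential. Since $G(\tilde{\mathcal{X}}, \cdot)$ is $L$-Lipschitz-smooth for every $\tilde{\mathcal{X}}$, so is $F$, and the descent lemma applied to the update $\bm{\theta}^{t+1} = \bm{\theta}^t - \alpha^t \nabla_{\bm{\theta}^t}\mathcal{L}_{meta}$ yields a one-step bound of the form $F(\bm{\theta}^{t+1}) \le F(\bm{\theta}^t) - \alpha^t \langle \nabla F(\bm{\theta}^t), \nabla_{\bm{\theta}^t}\mathcal{L}_{meta}\rangle + \tfrac{L}{2}(\alpha^t)^2 \lVert \nabla_{\bm{\theta}^t}\mathcal{L}_{meta}\rVert^2$. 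Taking the conditional expectation over the sampled mini-batches $\tilde{\mathcal{X}}, \tilde{\mathcal{U}}$ given $\bm{\theta}^t$ then reduces everything to controlling a first-order (cross) term and a second-order (variance) term.

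First I would establish that the meta update is genuinely a descent direction for the labeled loss. By the interpretation in Eq.~(\ref{eq:interpretion}), the meta gradient of sample $j$ equals $-\alpha^t$ times the inner product of the mini-batch labeled gradient $g_t = \nabla_{\bm{\theta}^t} G(\tilde{\mathcal{X}}, \bm{\theta}^t)$ and the pseudo-labeled gradient $\nabla_{\bm{\theta}^t} L(\hat{\bm{y}}_j, p(\tilde{\bm{u}}_j \mid \bm{\theta}^t))$. Hence the $0$-$1$ rule in Eq.~(\ref{eq:approximation_formula}) keeps exactly those samples whose gradients make a nonobtuse angle with $g_t$, so that $\nabla_{\bm{\theta}^t}\mathcal{L}_{meta}$ is a convex combination of vectors each having nonnegative inner product with $g_t$. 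This gives the pointwise property $\langle g_t, \nabla_{\bm{\theta}^t}\mathcal{L}_{meta}\rangle \ge 0$ for every realization of the mini-batch.

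Next I would tame the second-order term precisely with Assumption~\ref{assum:bound}, which bounds $\mathbb{E}_{\tilde{\mathcal{X}},\tilde{\mathcal{U}}}\lVert \nabla_{\bm{\theta}^t}\mathcal{L}_{meta}\rVert^2$ by $\sigma\lVert\nabla F(\bm{\theta}^t)\rVert^2$. This is exactly where the weak learning-rate schedule ($\alpha^t \to 0$, $\sum_t \alpha^t = \infty$, with \emph{no} $\sum_t (\alpha^t)^2 < \infty$ requirement) pays off: because the variance is controlled by $\lVert\nabla F\rVert^2$ rather than by a fixed constant, the second-order contribution is of order $(\alpha^t)^2\lVert\nabla F\rVert^2$ and is eventually dominated by the first-order decrease once $\alpha^t$ is small. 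Combining this with a lower bound of the form $\langle\nabla F(\bm{\theta}^t), \mathbb{E}[\nabla_{\bm{\theta}^t}\mathcal{L}_{meta}]\rangle \ge c\lVert\nabla F(\bm{\theta}^t)\rVert^2$, I would obtain, for all large $t$, a net expected decrease $\mathbb{E}[F(\bm{\theta}^{t+1})] \le \mathbb{E}[F(\bm{\theta}^t)] - \tfrac{c}{2}\alpha^t \mathbb{E}\lVert\nabla F(\bm{\theta}^t)\rVert^2$. Summing over $t$ and using that $F$ is bounded below gives $\sum_t \alpha^t\, \mathbb{E}\lVert\nabla F(\bm{\theta}^t)\rVert^2 < \infty$; since $\sum_t \alpha^t = \infty$, this forces $\liminf_t \lVert\nabla F(\bm{\theta}^t)\rVert = 0$, and a standard argument using $L$-smoothness to bound the change of $\lVert\nabla F\rVert$ across consecutive iterates upgrades this to $\lim_t \lVert\nabla F(\bm{\theta}^t)\rVert = 0$, which also shows every limit point is stationary.

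The hard part will be promoting the pointwise, mini-batch descent property $\langle g_t, \nabla_{\bm{\theta}^t}\mathcal{L}_{meta}\rangle \ge 0$ into the quantitative, population-level cross-term bound $\langle\nabla F, \mathbb{E}[\nabla_{\bm{\theta}^t}\mathcal{L}_{meta}]\rangle \ge c\lVert\nabla F\rVert^2$ that the descent recursion requires. The subtlety is that the selection weights are computed from the \emph{same} mini-batch gradient $g_t$ that defines $G$, so $g_t$ and $\nabla_{\bm{\theta}^t}\mathcal{L}_{meta}$ are correlated and $\mathbb{E}\langle g_t, \nabla_{\bm{\theta}^t}\mathcal{L}_{meta}\rangle$ is not the same as $\langle \nabla F, \mathbb{E}[\nabla_{\bm{\theta}^t}\mathcal{L}_{meta}]\rangle = \langle \mathbb{E}[g_t], \mathbb{E}[\nabla_{\bm{\theta}^t}\mathcal{L}_{meta}]\rangle$. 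Closing this gap — controlling the covariance between the selection and the aggregated pseudo-gradient, or equivalently showing the expected meta direction stays aligned with $\nabla F$ with a strictly positive coefficient — is where the real work lies, and is precisely the place Assumption~\ref{assum:bound} must be invoked together with the near-optimality of $\mathcal{L}_{meta}$.
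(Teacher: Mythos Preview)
Your overall architecture matches the paper's: descent lemma, Assumption~\ref{assum:bound} for the second-order term, summation to obtain $\sum_t \alpha^t\lVert\nabla F(\bm{\theta}^t)\rVert^2<\infty$, hence $\liminf=0$, and then an upgrade to $\lim=0$ via Lipschitz smoothness. You also locate the hard step correctly: lower-bounding the first-order cross term by $c\,\lVert\nabla F\rVert^2$.

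The gap is that your suggested way to close it does not work. Assumption~\ref{assum:bound} is a pure \emph{magnitude} bound on $\mathbb{E}\lVert\nabla\mathcal{L}_{meta}\rVert^2$; it says nothing about the direction of $\mathbb{E}[\nabla\mathcal{L}_{meta}]$ relative to $\nabla F$, so it cannot deliver the alignment inequality you need, and ``near-optimality of $\mathcal{L}_{meta}$'' is not a usable hypothesis here. Relatedly, by applying the descent lemma to the population loss $F$ you have already factored the cross term into $\langle\nabla F,\mathbb{E}[\nabla\mathcal{L}_{meta}]\rangle$, discarding precisely the positive correlation between $g_t$ and the $0$--$1$ selection that one would want to exploit. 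The paper instead applies the descent lemma to the per-batch loss $G(\tilde{\mathcal{X}},\cdot)$, so the cross term is the correlated quantity $\mathbb{E}_{\tilde{\mathcal{X}},\tilde{\mathcal{U}}}\langle g_t,\nabla\mathcal{L}_{meta}\rangle$, and then uses a structural fact from Section~\ref{sec:Implementation Details} that you have overlooked: $\tilde{\mathcal{U}}$ is built by MixUp over $\mathcal{W}=\text{Concat}(\mathcal{X},\mathcal{U})$, so it contains every MixUp pair with both endpoints in $\mathcal{X}$. Averaging over MixUp permutations, one first drops the terms involving $\mathcal{U}$ (all nonnegative, by the selection rule), and on the remaining $\mathcal{X}\times\mathcal{X}$ terms one removes the $0$--$1$ weights (this only adds back the negative-inner-product terms, hence is again a lower bound). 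What survives is $[\nabla \overline G(\mathcal{X},\bm{\theta}^t,\lambda_1)]^T[\nabla \overline G(\mathcal{X},\bm{\theta}^t,\lambda_2)]$; taking expectation over the independent MixUp coefficients $\lambda_1,\lambda_2$ and over batches, together with Jensen, yields the explicit lower bound $\tfrac{|\tilde{\mathcal{X}}|}{|\tilde{\mathcal{U}}|^2}\,\lVert\nabla F(\bm{\theta}^t)\rVert^2$. Assumption~\ref{assum:bound} is used only for the second-order term, exactly as you already planned; it plays no role in the cross-term argument.
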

\begin{proof}
    \vspace{-2ex}
    See Appendix B.
\end{proof}
\vskip -0.2in


\begin{table*}[t]
    \centering
    \begin{scriptsize}
    \caption{Performance of \textit{Meta-Semi} and state-of-the-art SSL algorithms on CIFAR with varying amount of labeled data. We report the average test errors and the standard deviations of 5 trials. $\bm{\dagger}$ refers to the experiments using the WRN-28 network, while all others use the CNN-13 network. In each setting, the best two results with CNN-13 and the best result with WRN-28 are \textbf{bold-faced}.}
    \vskip -0.07in
    \label{tab:cnn13}
    \setlength{\tabcolsep}{0.75mm}{
    \renewcommand\arraystretch{1.2}
    \resizebox{\columnwidth}{!}{
    \begin{tabular}{l|ccc|cc}
    \hline
    Dataset & \multicolumn{3}{c|}{CIFAR-10}  &  \multicolumn{2}{|c}{CIFAR-100}     \\
    \hline
    Number of Labeled Samples & 1000&2000&4000  &  4000&10000    \\
    \hline

    Supervised & 39.95 $\pm$ 0.75\% & 27.67  $\pm$ 0.12\% & 20.42  $\pm$ 0.21\%  &  58.31  $\pm$ 0.89\%   &   44.56  $\pm$ 0.30\%     \\
    Supervised + MixUp \cite{zhang2017mixup} & 31.83  $\pm$ 0.65\% & 24.22  $\pm$ 0.15\% & 17.37  $\pm$ 0.35\%  &  54.87  $\pm$ 0.07\%   &   40.97  $\pm$ 0.47\%     \\

    \hline
    $\Pi$-model \cite{laine2016temporal} & 28.74  $\pm$ 0.48\% & 17.57  $\pm$ 0.44\%& 12.36  $\pm$ 0.17\% & 55.39  $\pm$ 0.55\%  & 38.06  $\pm$ 0.37\%    \\
    Temp-ensemble \cite{laine2016temporal}  & 25.15  $\pm$ 1.46\% &15.78  $\pm$ 0.44\% & 11.90  $\pm$ 0.25\%  & - &   38.65  $\pm$ 0.51\%    \\
    Mean Teacher \cite{tarvainen2017mean} & 18.27  $\pm$ 0.53\%&13.45  $\pm$ 0.30\%&10.73  $\pm$ 0.14\%  & 45.36  $\pm$ 0.49\%  & 35.96  $\pm$ 0.77\%   \\
    VAT \cite{miyato2018virtual} & 18.12   $\pm$ 0.82\%&13.93   $\pm$ 0.33\%&11.10   $\pm$ 0.24\%  & - & -     \\

    SNTG \cite{luo2018smooth}& 18.41   $\pm$ 0.52\%&13.64   $\pm$ 0.32\%&10.93   $\pm$ 0.14\%  & -  & 37.97  $\pm$ 0.29\%       \\

    Learning to Reweight \cite{ren2018learning} & 11.74 $\pm$ 0.12\% & - & 9.44 $\pm$ 0.17\% & 46.62 $\pm$ 0.29\% &  37.31 $\pm$ 0.47\%     \\

    MT + Fast SWA \cite{athiwaratkun2018there}& 15.58\%&11.02\%&9.05\%  & -  & 33.62 $\pm$ 0.54\%       \\

    ICT \cite{verma2019interpolation}&  12.44 $\pm$ 0.57\% &  8.69 $\pm$ 0.15\%  &  7.18 $\pm$ 0.24\%  & 40.07 $\pm$ 0.38\% &  32.24 $\pm$ 0.16\%  \\


    \textit{Meta-Semi} & \textbf{10.27 $\pm$ 0.66\%} &  \textbf{8.42 $\pm$ 0.30\%} &  \textbf{7.05 $\pm$ 0.27\%}  &  \textbf{37.61 $\pm$ 0.56\%}  &  \textbf{30.51 $\pm$ 0.32\%}      \\

    \textit{Meta-Semi} + ICT & \textbf{9.29 $\pm$ 0.62\%}  & \textbf{7.05 $\pm$ 0.12\%} & \textbf{6.42 $\pm$ 0.18\%}  & \textbf{37.12 $\pm$ 0.59\%}  & \textbf{29.68 $\pm$ 0.05\%}        \\

    \hline
    Mean Teacher $\bm{\dagger}$ \cite{tarvainen2017mean}  &  17.32 $\pm$ 4.00\% &  12.17 $\pm$ 0.22\%  &  10.36 $\pm$ 0.25\%  &  -  & - \\

    MixMatch $\bm{\dagger}$ \cite{berthelot2019mixmatch}  &  7.75 $\pm$ 0.32\% &  7.03 $\pm$ 0.15\%  & 6.24 $\pm$ 0.06\%  &  -  & 30.84 $\pm$ 0.29\% \\

    \textit{Meta-Semi} $\bm{\dagger}$ & \textbf{7.34 $\pm$ 0.22\%} & \textbf{6.58 $\pm$ 0.07\%} &  \textbf{6.10 $\pm$ 0.10\%}  & - & \textbf{29.69 $\pm$ 0.18\%}   \\


    \hline

    \end{tabular}}}
    \end{scriptsize}
    \vskip -0.22in
\end{table*}

\begin{wraptable}{r}{5.5cm}
	\small
    \centering
    \vskip -0.4in
    \caption{Test errors on STL-10. We adopt the same experimental setups as  \cite{berthelot2019mixmatch}. The best result is \textbf{bold-faced}.}
    \vskip -0.13in
    \label{tab:stl10}
    \setlength{\tabcolsep}{2.5mm}{
    \vspace{5pt}
    \renewcommand\arraystretch{1.1} 
    \begin{tabular}{l|c}
        \hline
            Method & STL-10, 1000 labels \\
            \hline
            SWWAE \cite{zhao2015stacked} & 25.70\% \\
            CC-GAN \cite{denton2016semi} & 22.20\% \\
            MixMatch \cite{berthelot2019mixmatch} & 10.18 $\pm$ 1.46\%  \\
            \textit{Meta-Semi} & \textbf{8.03 $\pm$ 0.24\%}\\
            \hline
    \end{tabular}}
    \vspace{-3ex}
\end{wraptable}
\section{Experiments}
\vspace{-2ex}
\label{sec:experiments}
In this section, we empirically evaluate the effectiveness of the proposed \textit{Meta-Semi} method, analyze its time complexity experimentally, and give sensitivity tests as well as ablation studies. All experiments are conducted using a single Nvidia Titan Xp GPU.  
\vspace{-2ex}
\subsection{Experimental Setup}
\vspace{-2ex}
Our experiments are based on four widely used image classification benchmarks, i.e., CIFAR-10/100 \cite{krizhevsky2009learning}, SVHN \cite{netzer2011reading} and STL-10 \cite{coates2011analysis}, and two modern deep networks, i.e., a 13-layer CNN (CNN-13) and the Wide-ResNet-28 (WRN-28). On CIFAR and SVHN, we randomly preserve the labels of certain numbers of samples (identical for each class), and remain all other samples unlabeled. On STL-10, we use pre-defined folds. Due to spatial limitation, details on data pre-processing, training/validation splitting, training configurations and baselines are deferred to Appendix C. These settings follow the common practice of SSL \cite{oliver2018realistic, berthelot2019mixmatch, verma2019interpolation, tarvainen2017mean, athiwaratkun2018there}. The hyper-parameter $\beta$ of \textit{Meta-Semi} is selected among $[0.2, 1]$ on the validation set.

\vspace{-2ex}
\subsection{Main Results}
\vspace{-2ex}
\textbf{Results on CIFAR} with various numbers of labeled samples are presented in Table \ref{tab:cnn13}. It can be observed that \textit{Meta-Semi} consistently outperforms state-of-the-art SSL algorithms in terms of generalization performance, especially with relatively less labeled data and larger numbers of classes. For example, when using CNN-13, on CIFAR-10 with 4000 labels, \textit{Meta-Semi} outperforms the competitive baseline, ICT, by $0.13\%$ in absolute error, while with 1,000 labels on CIFAR-10 and with 4,000 labels on CIFAR-100, \textit{Meta-Semi} yields more significant improvements of $2.17\%$ and $2.46\%$, respectively. MixMatch shows robust performance with small labeled sets as well, but \textit{Meta-Semi} outperforms it in terms of test accuracy. Moreover, it is shown that the performance of \textit{Meta-Semi} can be significantly improved by combining it with consistency based methods. On CIFAR-10 with 2000 labels, \textit{Meta-Semi} + ICT outperforms \textit{Meta-Semi} by $1.37\%$. 


\textbf{Results on STL-10 and SVHN} are presented in Table \ref{tab:stl10} and Table \ref{tab:svhn}, respectively. The results indicate that the test accuracy of \textit{Meta-Semi} outperforms MixMatch by more than $2\%$ on STL-10, and is comparable with state-of-the-art SSL algorithms on SVHN.

\begin{figure}[t]
    \begin{minipage}{0.49\columnwidth}
        \centering
    \begin{footnotesize}
    \captionof{table}{Test errors on SVHN with varying amount of labeled data. We report the average results and the standard deviations of 5 independent experiments. All results are based on CNN-13. The best results are \textbf{bold-faced}.}
    \vskip -0.05in 
    \label{tab:svhn}
    \setlength{\tabcolsep}{1.75mm}{
    \renewcommand\arraystretch{1.2}
    \resizebox{\columnwidth}{!}{
    \begin{tabular}{l|cc}
    \hline
    \multirow{2}{*}{Methods} & SVHN & SVHN     \\
     & 500 labels & 1000 labels \\
    \hline
    VAT \cite{miyato2018virtual} & - & 5.42\% \\
    $\Pi$-model \cite{laine2016temporal} & 6.65 $\pm$ 0.53\%& 4.82 $\pm$ 0.17\% \\
    Temp-ensemble \cite{laine2016temporal}  & 5.12 $\pm$ 0.13\% & 4.42 $\pm$ 0.16\% \\
    Mean Teacher \cite{tarvainen2017mean} & 4.18 $\pm$ 0.27\% &  3.95 $\pm$ 0.19\% \\
    ICT \cite{verma2019interpolation} & 4.23 $\pm$ 0.15\% & 3.89 $\pm$ 0.04\%\\
    SNTG \cite{luo2018smooth} & 3.99 $\pm$ 0.24\% & 3.86 $\pm$ 0.27\%\\
    \textit{Meta-Semi}  & 4.12 $\pm$ 0.21\% & 3.92 $\pm$ 0.11\% \\
    \textit{Meta-Semi} + ICT & \textbf{3.98 $\pm$ 0.09\%} & \textbf{3.77 $\pm$ 0.05\%}\\
    \hline
    \end{tabular}}}
    \end{footnotesize}
    \end{minipage}
    \hspace{0.02in}
    \begin{minipage}{0.49\columnwidth}
        \centering
        \includegraphics[width=1\columnwidth]{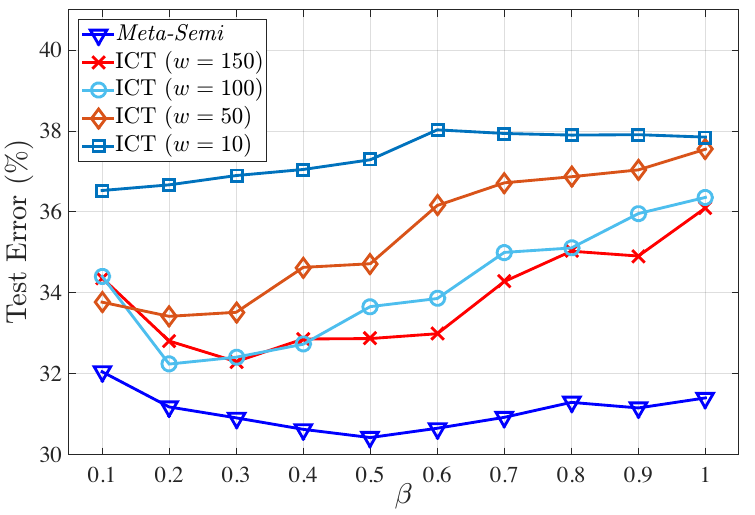}	
        \vskip -0.12in
        \caption{
        Test errors with varying $\beta$ on CIFAR-100 using 10,000 labels. The CNN-13 network is used. We also report the results of ICT \cite{verma2019interpolation} when the unsupervised consistency coefficient $w$ changes among the recommended range.
    }\label{fig:sense_1}
    \vskip -0.2in  
    \end{minipage}
    \vskip -0.25in
\end{figure}

\begin{table}
    \caption{Performance of \textit{Meta-Semi} v.s. baselines with fixed amount of training time. We report the mean test errors of both networks on CIFAR-100 with 10,000 labels. The best results are \textbf{bold-faced}}
    \label{tab:eff}
    \centering
    \subtable[CNN-13]{
        \setlength{\tabcolsep}{1.25mm}{
            \renewcommand\arraystretch{1.2}
            \resizebox{0.46\columnwidth}{!}{
            \begin{tabular}{l|c|c|c|c}
            \hline
            Training Time & 5.0h & 7.5h  & 10.0h & 12.6h  \\
            \hline
            ICT \cite{verma2019interpolation} &33.43\%&32.84\%&32.61\%&32.24\% \\
            \textit{Meta-Semi} &\textbf{32.73\%}&\textbf{31.81\%}&\textbf{31.06\%}&\textbf{30.84\%} \\
            \hline
            \end{tabular}}}
    }
    \hspace{0.02in}
    \subtable[WRN-28]{        
        \setlength{\tabcolsep}{1.25mm}{
            \renewcommand\arraystretch{1.2}
            \resizebox{0.46\columnwidth}{!}{
            \begin{tabular}{l|c|c|c|c}
            \hline
            Training Time & 13.7h & 18.3h  & 22.8h & 29.2h  \\
            \hline
            MixMatch \cite{berthelot2019mixmatch} &32.94\%&31.91\%&31.26\%&30.84\% \\
            \textit{Meta-Semi} &\textbf{31.74\%}&\textbf{30.85\%}&\textbf{30.50\%}&\textbf{30.13\%} \\
            \hline
            \end{tabular}}}
    }
    \vskip -0.25in
\end{table}

\vspace{-2ex}
\subsection{Hyper-parameter Sensitivity}
\vspace{-2ex}
\label{sec:sense}
The $\beta$ parameter for the Beta distribution in MixUp augmentation is the only additional hyper-parameter that needs to be tuned when \textit{Meta-Semi} is implemented in new SSL tasks. To study the sensitivity of our method to $\beta$, we vary the value of $\beta$, and present the test errors in Figure \ref{fig:sense_1}. For comparison, we also present the results of ICT \cite{verma2019interpolation} when its two additional hyper-parameters ($\beta$ and the unsupervised regularization coefficient $w$) change among the recommended candidates provided by the original paper. One can observe that the performance of \textit{Meta-Semi} is relatively stable when $\beta$ ranges from $0.1$ to $1$. In contrast, ICT is sensitive to both the two hyper-parameters. It has been shown that hyper-parameter searching is difficult on realistic SSL tasks \cite{oliver2018realistic}. \textit{Meta-Semi} can be more easily applied as it requires less effort for tuning hyper-parameters.

\begin{wraptable}{r}{7.5cm}
    \centering
    \begin{footnotesize}
    \vskip -0.35in
    \caption{Ablation study results. We report the test errors on CIFAR-100 with 4,000 and 10,000 labels. The CNN-13 network is used.}
    \vskip -0.1in
    \label{tab:ablation}
    \setlength{\tabcolsep}{1.25mm}{
    \renewcommand\arraystretch{1.25}
    \resizebox{7.5cm}{!}{
    \begin{tabular}{l|cc}
    \hline
    \multirow{2}{*}{Ablation}  & CIFAR-100&  CIFAR-100    \\
     & 4000 labels &  10000  labels  \\
     \hline
     Without parameter EMA  &47.68 $\pm$ 0.27\%& 37.15 $\pm$ 1.02\% \\
     One-hot pseudo labels  & 41.52 $\pm$ 0.51\% & 32.78 $\pm$ 0.41\% \\
     \hline
     MixUp on unlabeled data only  & 37.69 $\pm$ 0.50\%& 30.56 $\pm$ 0.39\% \\
     MixUp on labeled data only  &45.90 $\pm$ 0.15\%& 36.11 $\pm$ 0.21\% \\
     Without MixUp &46.71 $\pm$ 0.05\%& 35.98 $\pm$ 0.69\%  \\
     \hline
     Reweighting with the constant 1 &40.26 $\pm$ 0.64\%& 32.17 $\pm$ 0.14\%  \\
     Reweighting with -1 and 1 &45.41 $\pm$ 0.38\%& 36.39 $\pm$ 0.44\%  \\
     \hline
     \textit{Meta-Semi}  &37.61 $\pm$ 0.56\%&30.51 $\pm$ 0.32\%\\
     \textit{Meta-Semi} + ICT  & 37.12 $\pm$ 0.59\%  & 29.68 $\pm$ 0.05\% \\
    \hline
    \end{tabular}}}
    \end{footnotesize}
    \vskip -0.3in
\end{wraptable}
\vspace{-2ex}
\subsection{Efficiency of \textit{Meta-Semi}}
\vspace{-2ex}
Our method generally requires more training time for each iteration as it includes bi-level optimization. However, we find that our algorithm converges fast and if we consider a fixed amount of training time, it still outperforms the others, as shown in Table \ref{tab:eff}. 

\vspace{-2ex}
\subsection{Ablation Study}
\vspace{-2ex}
To provide additional insights into our method, we further conduct the ablation experiments by removing or altering the components of \textit{Meta-Semi}. The results are shown in Table \ref{tab:ablation}. It can be seen that parameter EMA and performing MixUp on unlabeled data are both important techniques to achieve high generalization performance. The observation is consistent with \cite{verma2019interpolation}. In addition, if all pseudo-labeled samples are weighted by the constant 1, \textit{Meta-Semi} is equivalent to a consistency based algorithm, which also shows effective performance.

\vspace{-2ex}
\section{Conclusion}
\vspace{-2ex}
In this paper, we have presented a novel semi-supervised classification algorithm under the meta-learning paradigm. The proposed \textit{Meta-Semi} algorithm is capable of adapting to various SSL tasks with impressive performance via tuning only one additional hyper-parameter, and empirically we have observed that the model performance is robust to different settings of this hyper-parameter. Theoretically, we have provided the convergence analysis to show that \textit{Meta-Semi} always converges to a stationary point under mild conditions. On four competitive datasets, \textit{Meta-Semi} has achieved state-of-the-art performance compared to existing deep SSL algorithms.

\section*{Broader Impact}
Semi-supervised learning is a widely used learning paradigm to reduce the time or economic cost of collecting annotations for large scale training sets. In this paper, we propose a \textit{Meta-Semi} algorithm that requires tuning only one addition hyper-parameter to adapt to a wide variety of semi-supervised scenarios. Our method may benefit various realistic semi-supervised applications in terms of both reducing the computational cost of hyper-parameter searching and further improving the performance of machine learning systems. For examples, search engines, social media companies and online advertising agencies all have the requirements of deploying high performance image recognition models. They can collect a large number of unannotated training samples through the Internet, annotate only a small subset of them, and implement our algorithm to obtain a highly generalized deep network rapidly, which may significantly save the cost. In addition, our algorithm may have larger impacts on medical applications, where accurate annotations usually require to be given by experts and are thus especially difficult to acquire.

For the research community, the proposed \textit{Meta-Semi} algorithm may open up the research investigating other methods to weight pseudo-labeled samples in semi-supervised learning, which is still an under explored topic. 

On the other hand, since the proposed algorithm is mainly based on convolutional networks (CNNs), it may suffer from the common problems of CNNs, such as vulnerable to adversarial attacks. Moreover, semi-supervised learning may have privacy risks. Since the companies need to collect a large amount of unannotated data for semi-supervised learning algorithms, they may potentially infringe privacy by improperly accessing user data.

In general, we believe that the potential positive impacts of this paper significantly outweigh the negative ones in terms of both the practical implementations and the research value.

\section*{Acknowledgments}
This work is supported in part by the Ministry of Science and Technology of China under Grant 2018AAA0101604, the National Natural Science Foundation of China under Grants 61906106 and 61936009, the Institute for Guo Qiang of Tsinghua University and Beijing Academy of Artificial Intelligence.
In particular, we appreciate the valuable discussion with Yitong Xia and Hong Zhang.

\bibliographystyle{plain}
\bibliography{neurips_2020}


\newpage
\appendix
\section*{Appendix: Meta-Semi: A Meta-learning Approach for Semi-supervised Learning}

\section{Proof of Proposition 1}
This section provides the proof of Proposition 1.
\begin{proposition_1}
    Suppose that $\overline{\bm{\theta}}^t_M$ is given by $M$ times of gradient descents starting from $\overline{\bm{\theta}}^t_0=\bm{\theta}^{t}$. Then we have
    \begin{equation}
           \left. \frac{\partial\sum_{i=1}^{|\tilde{\mathcal{X}}|}L(\tilde{\bm{y}}_i, p(\tilde{\bm{x}}_i|\overline{\bm{\theta}}^t_M))}{\partial w_j}\right|_{\bm{w}=0} 
        =  \left. M \left[  \frac{\partial\sum_{i=1}^{|\tilde{\mathcal{X}}|}L(\tilde{\bm{y}}_i, p(\tilde{\bm{x}}_i|\overline{\bm{\theta}}^t_1))}{\partial w_j}\right|_{\bm{w}=0}  \right]\!\!,\ \forall\  1\!\leq\!j\!\leq |\tilde{\mathcal{U}}|.
    \end{equation}
\end{proposition_1}
\begin{proof}
    According to the updating rule $ \overline{\bm{\theta}}^t_{M} = \overline{\bm{\theta}}^t_{M-1}\!-\!\alpha^t \nabla_{\overline{\bm{\theta}}^t_{M-1}}\!\!\sum_{k=1}^{|\tilde{\mathcal{U}}|} w_k L(\hat{\bm{y}}_k, p(\tilde{\bm{u}}_k|\overline{\bm{\theta}}^t_{M-1}))$, we obtain:
    \begin{align}
            \label{eq_2}
          &  \left. \frac{\partial\sum_{i=1}^{|\tilde{\mathcal{X}}|}L(\tilde{\bm{y}}_i, p(\tilde{\bm{x}}_i|\overline{\bm{\theta}}^t_M))}{\partial w_j}\right|_{\bm{w}=0} \\
        \!= & \! \left[\!\left.\frac{\partial \!\sum_{i=1}^{|\tilde{\mathcal{X}}|} \!L(\tilde{\bm{y}}_i, p(\tilde{\bm{x}}_i|\overline{\bm{\theta}}^t_M))}{\partial \overline{\bm{\theta}}^t_M} \! \right]^{\!T} \! \! \left[ \!\frac{\partial \overline{\bm{\theta}}^t_{M}}{ \partial w_j} \! \right] \right|_{\bm{w}=0} \\
        \!= &\! \left[\!\left.\frac{\partial \!\sum_{i=1}^{|\tilde{\mathcal{X}}|} \!L(\tilde{\bm{y}}_i, p(\tilde{\bm{x}}_i|\overline{\bm{\theta}}^t_M))}{\partial \overline{\bm{\theta}}^t_M} \! \right]^{\!T} \! \!
        \left[ \!\frac{\partial (\overline{\bm{\theta}}^t_{M-1}\!-\!\alpha^t \nabla_{\overline{\bm{\theta}}^t_{M-1}} 
        \!\!\sum_{k=1}^{|\tilde{\mathcal{U}}|} w_k L(\hat{\bm{y}}_k, p(\tilde{\bm{u}}_k|\overline{\bm{\theta}}^t_{M-1}))) }{ \partial w_j} \! \right] \right|_{\bm{w}=0} \\ 
        \!= & \!\left[\!\frac{\partial \!\sum_{i=1}^{|\tilde{\mathcal{X}}|} \!L(\tilde{\bm{y}}_i, p(\tilde{\bm{x}}_i|\overline{\bm{\theta}}^t_M))}{\partial \overline{\bm{\theta}}^t_M} \! \right]^{\!T} \! \!\left[ \frac{\partial  \overline{\bm{\theta}}^t_{M-1}}{\partial w_j} - \alpha^t \sum_{k=1}^{|\tilde{\mathcal{U}}|} \left[
          \frac{\partial w_k\!\nabla_{\overline{\bm{\theta}}^t_{M-1}}\!\!L(\hat{\bm{y}}_k, p(\tilde{\bm{u}}_k|\overline{\bm{\theta}}^t_{M-1}))}{\partial w_k}\ 
          \frac{\partial w_k}{\partial w_j} \right. \right.
          \nonumber\\
           &\  \ \ \ \ \ \ \ \ \ \ \ \ \ \ \ \ \ \ \  \ \ \ \ \ \ \ \ 
           + \left.\left. \left. \frac{\partial w_k  \nabla_{\overline{\bm{\theta}}^t_{M-1}}\!\!L(\hat{\bm{y}}_k, p(\tilde{\bm{u}}_k|\overline{\bm{\theta}}^t_{M-1}))}{\partial \nabla_{\overline{\bm{\theta}}^t_{M-1}}\!\!L(\hat{\bm{y}}_k, p(\tilde{\bm{u}}_k|\overline{\bm{\theta}}^t_{M-1}))}\  \frac{\partial \nabla_{\overline{\bm{\theta}}^t_{M-1}}\!\!L(\hat{\bm{y}}_k, p(\tilde{\bm{u}}_k|\overline{\bm{\theta}}^t_{M-1}))}{\partial w_j}  \right]
          \right]  \right|_{\bm{w}=0} \\
        \!= & \!\left[\!\left.\frac{\partial \!\sum_{i=1}^{|\tilde{\mathcal{X}}|} \!L(\tilde{\bm{y}}_i, p(\tilde{\bm{x}}_i|\overline{\bm{\theta}}^t_M))}{\partial \overline{\bm{\theta}}^t_M} \! \right]^{\!T} \! \!\left[ \frac{\partial  \overline{\bm{\theta}}^t_{M-1}}{\partial w_j}-\alpha^t \! \left[ \frac{\partial L(\hat{\bm{y}}_j, p(\tilde{\bm{u}}_j|\overline{\bm{\theta}}^t_{M-1}))}{\partial \overline{\bm{\theta}}^t_{M-1}}\!  \right. \right. \right.\nonumber\\
        & \  \ \ \ \ \ \ \ \ \ \ \ \ \ \ \ \ \ \ \  \ \ \ \ \ \ \  \  \ \ \ \ \ \  \ \ \ \ \ \ \  \ \ \ \ \ \ \ \ \ \ \ \ \ \ \ \ \ \ \  \ \ \ \ \ \
        \left.\left.\left. +\!\sum_{k=1}^{|\tilde{\mathcal{U}}|}
        \! w_k\! \frac{\partial \nabla_{\overline{\bm{\theta}}^t_{M-1}}\!\!L(\hat{\bm{y}}_k, p(\tilde{\bm{u}}_k|\overline{\bm{\theta}}^t_{M-1}))}{\partial w_j} \right] \right] \right|_{\bm{w}=0} \\
        \label{eq_7}
        \!= & \!\left[\!\left.\frac{\partial \!\sum_{i=1}^{|\tilde{\mathcal{X}}|} \!L(\tilde{\bm{y}}_i, p(\tilde{\bm{x}}_i|{\bm{\theta}}^t))}{\partial {\bm{\theta}}^t} \! \right]^{\!T} \! \!\left[ \frac{\partial  \overline{\bm{\theta}}^t_{M-1}}{\partial w_j}\right|_{\bm{w}=0} - \alpha^t  \frac{\partial L(\hat{\bm{y}}_j, p(\tilde{\bm{u}}_j|{\bm{\theta}}^t))}{\partial {\bm{\theta}}^t}  \right] \\
        \!= & \!\left[\!\left.\frac{\partial \!\sum_{i=1}^{|\tilde{\mathcal{X}}|} \!L(\tilde{\bm{y}}_i, p(\tilde{\bm{x}}_i|{\bm{\theta}}^t))}{\partial {\bm{\theta}}^t} \! \right]^{\!T} \! \!\left[ \frac{\partial  \overline{\bm{\theta}}^t_{M-2}}{\partial w_j}\right|_{\bm{w}=0} - 2\alpha^t  \frac{\partial L(\hat{\bm{y}}_j, p(\tilde{\bm{u}}_j|{\bm{\theta}}^t))}{\partial {\bm{\theta}}^t}  \right] \\
        \label{eq_9}
        \!= & \!\left[\!\left.\frac{\partial \!\sum_{i=1}^{|\tilde{\mathcal{X}}|} \!L(\tilde{\bm{y}}_i, p(\tilde{\bm{x}}_i|{\bm{\theta}}^t))}{\partial {\bm{\theta}}^t} \! \right]^{\!T} \! \!\left[ \frac{\partial  \overline{\bm{\theta}}^t_{0}}{\partial w_j}\right|_{\bm{w}=0} - M\alpha^t  \frac{\partial L(\hat{\bm{y}}_j, p(\tilde{\bm{u}}_j|{\bm{\theta}}^t))}{\partial {\bm{\theta}}^t}  \right] \\
        \label{eq_10}
        \!= & - M\alpha^t \!\left[\!\frac{\partial \!\sum_{i=1}^{|\tilde{\mathcal{X}}|} \!L(\tilde{\bm{y}}_i, p(\tilde{\bm{x}}_i|{\bm{\theta}}^t))}{\partial {\bm{\theta}}^t} \! \right]^{\!T} \! \!\left[   \frac{\partial L(\hat{\bm{y}}_j, p(\tilde{\bm{u}}_j|{\bm{\theta}}^t))}{\partial {\bm{\theta}}^t}  \right].
    \end{align}
    In the above, the Eq. (\ref{eq_7}) is obtained as we have $\overline{\bm{\theta}}^t_{M} = \overline{\bm{\theta}}^t_{M-1} = \ldots =\overline{\bm{\theta}}^t_{0} $ when $\bm{w} = 0$. The Eq. (\ref{eq_9}) follows repeatedly using Eqs. (\ref{eq_2}-\ref{eq_7}). Let $M=1$, we have
    \begin{equation}
        \label{eq_11}
        \left. \frac{\partial\sum_{i=1}^{|\tilde{\mathcal{X}}|}L(\tilde{\bm{y}}_i, p(\tilde{\bm{x}}_i|\overline{\bm{\theta}}^t_1))}{\partial w_j}\right|_{\bm{w}=0} 
        = - \alpha^t \!\left[\!\frac{\partial \!\sum_{i=1}^{|\tilde{\mathcal{X}}|} \!L(\tilde{\bm{y}}_i, p(\tilde{\bm{x}}_i|{\bm{\theta}}^t))}{\partial {\bm{\theta}}^t} \! \right]^{\!T} \! \!\left[   \frac{\partial L(\hat{\bm{y}}_j, p(\tilde{\bm{u}}_j|{\bm{\theta}}^t))}{\partial {\bm{\theta}}^t}  \right].
    \end{equation}
    By combining Eq. (\ref{eq_10}) and Eq. (\ref{eq_11}), we prove the desired proposition.
\end{proof}

\section{Proof of Proposition 2}
This Section provides the proof of Proposition 2.
In our proof, the MixUp augmentation is considered since it is an important part of our algorithm.
We begin with a Lemma \cite{bertsekas1997nonlinear} based on the definition of Lipschitz-smooth.
\begin{definition_1}
    A continuously differentiable function $f: \mathbb{R}^n \to \mathbb{R}$ is said to be Lipschitz-smooth with constant $L$ if
\begin{align*}
\lVert \nabla f(x) - \nabla f(y) \rVert \leq L \lVert x - y \rVert, \forall x, y \in \mathbb{R}^n.
\end{align*}
\end{definition_1}
\begin{lemma}
    \label{lemma}
    Assume that the continuously differentiable function $f: \mathbb{R}^n \to \mathbb{R}$ is Lipschitz-smooth with the scalar $L$. Then
    \begin{align*}
        f(x + y) \leq f(x) + y^T \nabla f(x) + \frac{L}{2} \lVert y \lVert^2, \forall x, y \in \mathbb{R}^n.
    \end{align*}
\end{lemma}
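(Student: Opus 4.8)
The plan is to prove the standard descent inequality by reducing it to a one-dimensional statement along the line segment joining $x$ and $x+y$, and then estimating the resulting integral using the Lipschitz-smoothness hypothesis. Concretely, for fixed $x, y \in \mathbb{R}^n$ I would define the auxiliary scalar function $g: [0,1] \to \mathbb{R}$ by $g(t) = f(x + ty)$. Since $f$ is continuously differentiable, $g$ is continuously differentiable by the chain rule, with $g'(t) = y^T \nabla f(x + ty)$. This is the key reparametrization that turns the multivariate claim into something the fundamental theorem of calculus can handle.

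Next I would apply the fundamental theorem of calculus to write
\begin{equation*}
    f(x+y) - f(x) = g(1) - g(0) = \int_0^1 g'(t)\, dt = \int_0^1 y^T \nabla f(x + ty)\, dt.
\end{equation*}
Subtracting the linear term $y^T \nabla f(x)$ from both sides and noting that $y^T \nabla f(x) = \int_0^1 y^T \nabla f(x)\, dt$, I would combine the two integrals into
\begin{equation*}
    f(x+y) - f(x) - y^T \nabla f(x) = \int_0^1 y^T \left[ \nabla f(x + ty) - \nabla f(x) \right] dt.
\end{equation*}

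The final step is to bound this integrand. By the Cauchy--Schwarz inequality, the integrand is at most $\lVert y \rVert \cdot \lVert \nabla f(x + ty) - \nabla f(x) \rVert$, and the Lipschitz-smoothness of $f$ with constant $L$ gives $\lVert \nabla f(x + ty) - \nabla f(x) \rVert \leq L \lVert ty \rVert = L t \lVert y \rVert$. Substituting and evaluating $\int_0^1 t\, dt = \tfrac{1}{2}$ yields the upper bound $\tfrac{L}{2}\lVert y \rVert^2$, which completes the proof. I do not expect a genuine obstacle here, as this is a classical result; the only places requiring mild care are justifying the chain rule and the applicability of the fundamental theorem of calculus (both licensed by the continuous differentiability of $f$, which is why the definition is stated with ``continuously differentiable''), and keeping the direction of the Cauchy--Schwarz and Lipschitz estimates consistent so that the bound lands on the stated quadratic term.
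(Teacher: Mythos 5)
Your proof is correct, and it is the standard argument: reduce to the scalar function $g(t) = f(x+ty)$, apply the fundamental theorem of calculus, rewrite the gap as $\int_0^1 y^T\left[\nabla f(x+ty) - \nabla f(x)\right]dt$, and bound it via Cauchy--Schwarz together with $\lVert \nabla f(x+ty) - \nabla f(x)\rVert \leq Lt\lVert y\rVert$, giving the factor $\int_0^1 t\,dt = \tfrac{1}{2}$. Note, however, that the paper itself offers no proof of this lemma to compare against: it states the result and cites it directly from Bertsekas' \emph{Nonlinear Programming} \cite{bertsekas1997nonlinear}, using it as a black box in the convergence analysis of Proposition 2. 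Your write-up simply supplies, correctly and with the appropriate care about where continuous differentiability is used (chain rule and fundamental theorem of calculus), the classical argument that the paper delegates to that citation.
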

Then we introduce a mild assumption to restrict the expected norm of the gradients. The assumption is empirically shown to be generally held in semi-supervised learning. 

To clearly present the assumption and the proof, we first define two new symbols.
Suppose that the supervised loss on the labeled mini-batch $\tilde{\mathcal{X}}$ at $t^{\text{th}}$ step is denoted by 
\begin{equation}
    G(\tilde{\mathcal{X}}, \bm{\theta}^t) = \sum_{i=1}^{|\tilde{\mathcal{X}}|} L(\tilde{\bm{y}}_i, p(\tilde{\bm{x}}_i|\bm{\theta}^t)).
\end{equation}
In addition, we denote the dynamically weighted loss of pseudo-labeled samples by
\begin{equation}
    F(\tilde{\mathcal{X}}, \tilde{\mathcal{U}}, \bm{\theta}^t)= \mathcal{L}_{meta} =\frac{1}{\sum_{j=1}^{|\tilde{\mathcal{U}}|}w_j^t} \sum_{j=1}^{|\tilde{\mathcal{U}}|} w_j^t L(\hat{\bm{y}}_j, p(\tilde{\bm{u}}_j|\bm{\theta}^t)),
\end{equation}
Note that we assume $F(\tilde{\mathcal{X}}, \tilde{\mathcal{U}}, \bm{\theta}^t)=0$ if $\sum_{j=1}^{|\tilde{\mathcal{U}}|}w_j^t=0$.
Then we have the following assumption.
\begin{assumption_1}
    \label{assum:bound_1}
    For all $t \geq 0$, there exists a positive scalar $\sigma$, such that
    \begin{align*}
        \mathbb{E}_{\tilde{\mathcal{X}}, \tilde{\mathcal{U}}} \lVert  \nabla_{\!\!\bm{\theta}^t} F(\tilde{\mathcal{X}}, \tilde{\mathcal{U}}, \bm{\theta}^t) \lVert^2 \leq \sigma\lVert \nabla_{\!\!\bm{\theta}^t} \mathbb{E}_{\tilde{\mathcal{X}}} G(\tilde{\mathcal{X}}, \bm{\theta}^t) \lVert^2.
    \end{align*}
    \vskip -0.3in
\end{assumption_1}
Now we are ready to present the detailed proof. 
Our proof is partially inspired by the proof of convergence for gradient based methods with diminishing stepsize provided by \cite{bertsekas1997nonlinear}.
\begin{proposition_2}
    Assume that the loss function on labeled data $G(\tilde{\mathcal{X}}, \bm{\theta}^t)$ is Lipschitz-smooth with regards to $\bm{\theta}^t$ for all $\tilde{\mathcal{X}}$, and that assumption \ref{assum:bound} holds. Suppose also that the learning rate $\alpha^t > 0$ satisfies:
    \begin{equation}
        \lim_{t \to \infty} \alpha^t = 0, \ \ \ \ \sum_{t=0}^{\infty} \alpha^t = \infty.
    \end{equation}
    Then every limit point of the sequence $\{ \bm{\theta}^t \}$ generated by Meta-Semi is a stationary point of $\mathbb{E}_{\tilde{\mathcal{X}}} G(\tilde{\mathcal{X}}, \bm{\theta}^t)$, namely, 
    \begin{align*}
        \lim_{t \to \infty} \lVert \nabla_{\!\!\bm{\theta}^t} \mathbb{E}_{\tilde{\mathcal{X}}} G(\tilde{\mathcal{X}}, \bm{\theta}^t) \lVert = 0.
    \end{align*}
\end{proposition_2}
\begin{proof}
    The MixUp data augmentation needs to be considered because it is leveraged to generate $\tilde{\mathcal{X}}$ and $\tilde{\mathcal{U}}$ from the original data. On the basis of the original labeled samples $\mathcal{X}$ and unlabeled samples $\mathcal{U}$ (associated with original pseudo labels), we have
    \begin{equation}
        \tilde{\mathcal{X}} = \text{MixUp}(\mathcal{X}, \text{Shuffle}(\mathcal{X}), \lambda_1), \lambda_1\!\sim\!\text{Beta}(\alpha, \alpha),
    \end{equation}
    \begin{equation}
        \mathcal{W} = \text{Concat}(\mathcal{X}, \mathcal{U}),
    \end{equation}
    \begin{equation}
        \tilde{\mathcal{U}} = \text{MixUp}(\mathcal{W}, \text{Shuffle}(\mathcal{W}), \lambda_2), \lambda_2\!\sim\!\text{Beta}(\alpha, \alpha).
    \end{equation}
    Given that the MixUp augmentation is performed between the mini-batch and itself with certain random permutation, we define the expected loss over all possible permutations by
    \begin{equation}
        \begin{split}
            \overline{G}(\mathcal{X}, \bm{\theta}^t, \lambda_1) = & \mathop{\mathbb{E}}_{
            \tilde{\mathcal{X}} \in
            \text{MixUp}(\mathcal{X}, \text{Shuffle}(\mathcal{X}), \lambda_1)
        }G(\tilde{\mathcal{X}}, \bm{\theta}^t)\\
        = & \mathop{\mathbb{E}}_{
            \tilde{\mathcal{X}} \in
            \text{MixUp}(\mathcal{X}, \text{Shuffle}(\mathcal{X}), \lambda_1)
        } \sum_{i=1}^{|\tilde{\mathcal{X}}|} L(\tilde{\bm{y}}_i, p(\tilde{\bm{x}}_i|\bm{\theta}^t)),
        \end{split}
    \end{equation}
    \begin{equation}
        \begin{split}
            \overline{F}(\tilde{\mathcal{X}}, \mathcal{X}, \mathcal{U}, \bm{\theta}^t, \lambda_2) = &  \mathop{\mathbb{E}}_{
            \tilde{\mathcal{U}} \in
            \text{MixUp}(\mathcal{W}, \text{Shuffle}(\mathcal{W}), \lambda_2)
        } \left[\sum_{j=1}^{|\tilde{\mathcal{U}}|}w_j^t \right]\!F(\tilde{\mathcal{X}}, \tilde{\mathcal{U}}, \bm{\theta}^t)\\
        = & \mathop{\mathbb{E}}_{
            \tilde{\mathcal{U}} \in
            \text{MixUp}(\mathcal{W}, \text{Shuffle}(\mathcal{W}), \lambda_2)
        }\sum_{j=1}^{|\tilde{\mathcal{U}}|} w_j^t L(\hat{\bm{y}}_j, p(\tilde{\bm{u}}_j|\bm{\theta}^t)),
        \end{split}
    \end{equation}
    where the first argument $\tilde{\mathcal{X}}$ of $\overline{F}(\cdot)$ is used for determining the dynamic weights of pseudo-labeled samples. Then we solve $\overline{G}$ and $\overline{F}$ in a closed form. Consider the following problem: $N$ different items are paired to the same $N$ items. Obviously, there are $N!$ modes of pairing in total. If we fix certain pair, we will have $(N-1)!$ modes of pairing left. Therefore, if we combine all $N!$ possible pairing modes together, we will find that any item is paired to every item (including itself) for $(N-1)!$ times. Similarly, in our problem, it is easy to obtain
    \begin{equation}
        \overline{G}(\mathcal{X}, \bm{\theta}^t, \lambda_1) =  \frac{1}{|\mathcal{X}|!} \sum_{\bm{x}_i, \bm{x}_j \in \mathcal{X}}
        (|\mathcal{X}|-1)! f_{ij}(\bm{\theta}^t, \lambda_1)
        =  \frac{1}{|{\tilde{\mathcal{X}}}|} \sum_{\bm{x}_i, \bm{x}_j \in  \mathcal{X}}
         f_{ij}(\bm{\theta}^t, \lambda_1),
    \end{equation}
    where $\bm{x}_i, \bm{x}_j$ are the original samples that can be either labeled or unlabeled. The loss of the augmented sample generated by performing MixUp augmentation between $\bm{x}_i$ and $\bm{x}_j$ with $\lambda_1$ is denoted by $f_{ij}(\bm{\theta}^t, \lambda_1)$. In a similar way, we can obtain
    \begin{equation}
        \overline{F}(\tilde{\mathcal{X}}, \mathcal{X}, \mathcal{U}, \bm{\theta}^t, \lambda_2)
        =  \frac{1}{|{\tilde{\mathcal{U}}}|}  \sum_{\bm{x}_i, \bm{x}_j \in \mathcal{X} \cup \mathcal{U}}
        w_{ij}^t(\lambda_2) \ f_{ij}(\bm{\theta}^t, \lambda_2),
    \end{equation}
    where $w_{ij}^t(\lambda_2)$ is the dynamic weight determined by 
    \begin{equation}
    \label{weight}
    w_{ij}^t(\lambda_2) =
    \begin{cases}
        1
        & 
        [\nabla_{\!\!\bm{\theta}^t} G(\tilde{\mathcal{X}}, \bm{\theta}^t)]^T [\nabla_{\!\!\bm{\theta}^t} f_{ij}(\bm{\theta}^t, \lambda_2)] \geq 0 \vspace{1ex} \\
        0
        & 
        [\nabla_{\!\!\bm{\theta}^t} G(\tilde{\mathcal{X}}, \bm{\theta}^t)]^T [\nabla_{\!\!\bm{\theta}^t} f_{ij}(\bm{\theta}^t, \lambda_2)] < 0
    \end{cases}.
    \end{equation}
    Now, consider the following inequation
    \begin{align}
        [\nabla_{\!\!\bm{\theta}^t} G(\tilde{\mathcal{X}}, \bm{\theta}^t)]^T [\nabla_{\!\!\bm{\theta}^t} F(\tilde{\mathcal{X}}, \tilde{\mathcal{U}}, \bm{\theta}^t)]
        =\  & \frac{1}{\sum_{j=1}^{|\tilde{\mathcal{U}}|}w_j^t} 
        [\nabla_{\!\!\bm{\theta}^t} G(\tilde{\mathcal{X}}, \bm{\theta}^t)]^T 
        \left[
            \sum_{j=1}^{|\tilde{\mathcal{U}}|} w_j^t 
        [\nabla_{\!\!\bm{\theta}^t} L(\hat{\bm{y}}_j, p(\tilde{\bm{u}}_j|\bm{\theta}^t))] 
        \right]\\
        \geq\  & \frac{1}{|\tilde{\mathcal{U}}|} [\nabla_{\!\!\bm{\theta}^t} G(\tilde{\mathcal{X}}, \bm{\theta}^t)]^T 
        \left[
            \sum_{j=1}^{|\tilde{\mathcal{U}}|} w_j^t 
        [\nabla_{\!\!\bm{\theta}^t} L(\hat{\bm{y}}_j, p(\tilde{\bm{u}}_j|\bm{\theta}^t))] 
        \right].
    \end{align}
    By taking the expectation over $\tilde{\mathcal{U}} \in \text{MixUp}(\mathcal{W}, \text{Shuffle}(\mathcal{W}), \lambda_2)$, we further obtain
    \begin{align}
        & \mathop{\mathbb{E}}_{
            \tilde{\mathcal{U}} \in
            \text{MixUp}(\mathcal{W}, \text{Shuffle}(\mathcal{W}), \lambda_2)
        }[\nabla_{\!\!\bm{\theta}^t} G(\tilde{\mathcal{X}}, \bm{\theta}^t)]^T [\nabla_{\!\!\bm{\theta}^t} F(\tilde{\mathcal{X}}, \tilde{\mathcal{U}}, \bm{\theta}^t)] \\
        \geq & \ \frac{1}{|\tilde{\mathcal{U}}|} [\nabla_{\!\!\bm{\theta}^t} G(\tilde{\mathcal{X}}, \bm{\theta}^t)]^T  [\nabla_{\!\!\bm{\theta}^t} 
        \overline{F}(\tilde{\mathcal{X}}, \mathcal{X}, \mathcal{U}, \bm{\theta}^t, \lambda_2)] \\
        = & \ \frac{1}{|\tilde{\mathcal{U}}|^2} \sum_{\bm{x}_i, \bm{x}_j \in \mathcal{X} \cup \mathcal{U}}
        w_{ij}^t(\lambda_2) \ [\nabla_{\!\!\bm{\theta}^t} G(\tilde{\mathcal{X}}, \bm{\theta}^t)]^T[\nabla_{\!\!\bm{\theta}^t} f_{ij}(\bm{\theta}^t, \lambda_2)] \\
        \geq& \ \frac{1}{|\tilde{\mathcal{U}}|^2} \sum_{\bm{x}_i, \bm{x}_j \in \mathcal{X} }
        w_{ij}^t(\lambda_2) \ [\nabla_{\!\!\bm{\theta}^t} G(\tilde{\mathcal{X}}, \bm{\theta}^t)]^T[\nabla_{\!\!\bm{\theta}^t} f_{ij}(\bm{\theta}^t, \lambda_2)] \\
        \geq& \ \frac{1}{|\tilde{\mathcal{U}}|^2} \sum_{\bm{x}_i, \bm{x}_j \in \mathcal{X} }
        [\nabla_{\!\!\bm{\theta}^t} G(\tilde{\mathcal{X}}, \bm{\theta}^t)]^T[\nabla_{\!\!\bm{\theta}^t} f_{ij}(\bm{\theta}^t, \lambda_2)] \\
        = & \ \frac{|\tilde{\mathcal{X}}|}{|\tilde{\mathcal{U}}|^2}  [\nabla_{\!\!\bm{\theta}^t} G(\tilde{\mathcal{X}}, \bm{\theta}^t)]^T 
        [\nabla_{\!\!\bm{\theta}^t} \overline{G}(\mathcal{X}, \bm{\theta}^t, \lambda_2)].
    \end{align} 
    Then by taking the expectation over $ \tilde{\mathcal{X}} \in \text{MixUp}(\mathcal{X}, \text{Shuffle}(\mathcal{X}), \lambda_1)$, we have
    \begin{align}
        & \mathop{\mathbb{E}}_{
            \tilde{\mathcal{X}} \in \text{MixUp}(\mathcal{X}, \text{Shuffle}(\mathcal{X}), \lambda_1)
        }\ 
        \mathop{\mathbb{E}}_{
            \tilde{\mathcal{U}} \in
            \text{MixUp}(\mathcal{W}, \text{Shuffle}(\mathcal{W}), \lambda_2)
        }[\nabla_{\!\!\bm{\theta}^t} G(\tilde{\mathcal{X}}, \bm{\theta}^t)]^T [\nabla_{\!\!\bm{\theta}^t} F(\tilde{\mathcal{X}}, \tilde{\mathcal{U}}, \bm{\theta}^t)] \\
        \geq & \ \frac{|\tilde{\mathcal{X}}|}{|\tilde{\mathcal{U}}|^2}  [\nabla_{\!\!\bm{\theta}^t} \overline{G}(\mathcal{X}, \bm{\theta}^t, \lambda_1)]^T 
        [\nabla_{\!\!\bm{\theta}^t} \overline{G}(\mathcal{X}, \bm{\theta}^t, \lambda_2)].
    \end{align}
    Finally, we take the expectation over $\lambda_1, \lambda_2$ and all possible batches $\mathcal{X}, \mathcal{U}$.
    Following from the convexity of $\lVert \cdot \lVert^2$, we have $\mathbb{E}(\lVert \cdot \lVert^2) \geq \lVert\mathbb{E}( \cdot )\lVert^2$.
    Therefore, we obtain
    \begin{align}
        \label{eq_37}
        \mathbb{E}_{\tilde{\mathcal{X}}, \tilde{\mathcal{U}}} [\nabla_{\!\!\bm{\theta}^t} G(\tilde{\mathcal{X}}, \bm{\theta}^t)]^T [\nabla_{\!\!\bm{\theta}^t} F(\tilde{\mathcal{X}}, \tilde{\mathcal{U}}, \bm{\theta}^t)] \geq & \frac{|\tilde{\mathcal{X}}|}{|\tilde{\mathcal{U}}|^2}  \mathbb{E}_{\mathcal{X}, \mathcal{U}} 
         \lVert\nabla_{\!\!\bm{\theta}^t}  \mathbb{E}_{\lambda} \overline{G}(\mathcal{X}, \bm{\theta}^t, \lambda) \lVert^2\\
         \geq & \frac{|\tilde{\mathcal{X}}|}{|\tilde{\mathcal{U}}|^2}   \lVert\nabla_{\!\!\bm{\theta}^t} \mathbb{E}_{\mathcal{X}, \mathcal{U}} \mathbb{E}_{\lambda} \overline{G}(\mathcal{X}, \bm{\theta}^t, \lambda) \lVert^2\\
         \label{eq_39}
        \geq & \frac{|\tilde{\mathcal{X}}|}{|\tilde{\mathcal{U}}|^2}   \lVert\nabla_{\!\!\bm{\theta}^t}  \mathbb{E}_{\tilde{\mathcal{X}}} G(\tilde{\mathcal{X}}, \bm{\theta}^t) \lVert^2,
    \end{align}
    where Inequality (\ref{eq_37}) is obtained as $\lambda_1$, $\lambda_2$ are mutually independent.
    Then we consider the updating rule of the stochastic gradient descent (SGD) algorithm:
    \begin{equation}
        \Delta \bm{\theta} = \bm{\theta}^{t+1} - \bm{\theta}^t = - \alpha^t \nabla_{\!\!\bm{\theta}^t} F(\tilde{\mathcal{X}}, \tilde{\mathcal{U}}, \bm{\theta}^t).
    \end{equation}
    Assume that the loss function on labeled data $G(\tilde{\mathcal{X}}, \bm{\theta}^t)$ is Lipschitz-smooth with the constant $L$. Following Lemma \ref{lemma}, we have
    \begin{align}
        G(\tilde{\mathcal{X}}, \bm{\theta}^{t+1}) \leq\  & G(\tilde{\mathcal{X}}, \bm{\theta}^{t}) + [\nabla_{\!\!\bm{\theta}^t} G(\tilde{\mathcal{X}}, \bm{\theta}^t)]^T \Delta \bm{\theta} + \frac{L}{2}\lVert \Delta \bm{\theta} \lVert^2 \\
        =\  & G(\tilde{\mathcal{X}}, \bm{\theta}^{t}) + \alpha^t \left[  \frac{1}{2}\alpha^t L \lVert \nabla_{\!\!\bm{\theta}^t} F(\tilde{\mathcal{X}}, \tilde{\mathcal{U}},\bm{\theta}^t)  \lVert^2 
            - [\nabla_{\!\!\bm{\theta}^t} G(\tilde{\mathcal{X}}, \bm{\theta}^t)]^T [\nabla_{\!\!\bm{\theta}^t} F(\tilde{\mathcal{X}}, \tilde{\mathcal{U}},\bm{\theta}^t)]\right].
    \end{align}
    Take the expectation over all possible $\tilde{\mathcal{X}}, \tilde{\mathcal{U}}$, and thus we obtain the following inequality using Assumption 1 and Inequality (\ref{eq_39}):
    \begin{align}
        \mathbb{E}_{\tilde{\mathcal{X}}} G(\tilde{\mathcal{X}}, \bm{\theta}^{t+1}) 
        \leq\ \mathbb{E}_{\tilde{\mathcal{X}}} G(\tilde{\mathcal{X}}, \bm{\theta}^{t}) + \alpha^t \left[ \frac{1}{2}\alpha^t \sigma L - \frac{|\tilde{\mathcal{X}}|}{|\tilde{\mathcal{U}}|^2}  \right]
        \lVert \nabla_{\!\!\bm{\theta}^t}  \mathbb{E}_{\tilde{\mathcal{X}}} G(\tilde{\mathcal{X}}, \bm{\theta}^t) \lVert^2.
    \end{align}
    As $\alpha^t \to 0$, there exists some positive constant $c$ such that for all $t$ greater than some index $\overline{t}$, we have
    \begin{align}
        \label{eq_27}
        \mathbb{E}_{\tilde{\mathcal{X}}} G(\tilde{\mathcal{X}}, \bm{\theta}^{t+1})  \leq\ \mathbb{E}_{\tilde{\mathcal{X}}} G(\tilde{\mathcal{X}}, \bm{\theta}^{t})  - \alpha^t c \lVert \nabla_{\!\!\bm{\theta} ^t}\mathbb{E}_{\tilde{\mathcal{X}}} G(\tilde{\mathcal{X}}, \bm{\theta}^t) \lVert^2,\ \  \forall t \geq \overline{t}.
    \end{align}
    We see that $\{\mathbb{E}_{\tilde{\mathcal{X}}} G(\tilde{\mathcal{X}}, \bm{\theta}^{t})\}$ is monotonically decreasing for all $t \geq \overline{t}$. As $G(\cdot)$ is computed using the cross-entropy loss over the predictions of networks, it follows $\mathbb{E}_{\tilde{\mathcal{X}}} G(\tilde{\mathcal{X}}, \bm{\theta}^{t}) \geq 0$. Therefore, $\{\mathbb{E}_{\tilde{\mathcal{X}}} G(\tilde{\mathcal{X}}, \bm{\theta}^{t})\}$ converges to a finite value. By adding Inequality (\ref{eq_27}) over all $t > \overline{t}$, we obtain
    \begin{equation}
        \label{eq_28}
        c \sum_{t=\overline{t}}^{\infty}\alpha^t \lVert \nabla_{\!\!\bm{\theta} ^t}\mathbb{E}_{\tilde{\mathcal{X}}} G(\tilde{\mathcal{X}}, \bm{\theta}^t) \lVert^2 \leq \mathbb{E}_{\tilde{\mathcal{X}}}G(\tilde{\mathcal{X}}, \bm{\theta}^{\overline{t}}) - \lim_{t \to \infty} \mathbb{E}_{\tilde{\mathcal{X}}}G(\tilde{\mathcal{X}}, \bm{\theta}^{t}) < \infty.
    \end{equation}
    It cannot exist an $\epsilon > 0$ such that $ \lVert\nabla_{\!\!\bm{\theta} ^t}\mathbb{E}_{\tilde{\mathcal{X}}} G(\tilde{\mathcal{X}}, \bm{\theta}^t)\lVert^2 > \epsilon$ for all $t$ greater than some $\hat{t}$. If so, as $ \sum_{t=0}^{\infty} \alpha^t = \infty$, the left side of Inequality (\ref{eq_28}) will come to infinity. Therefor, we must have:
    \begin{equation}
        \mathop{\lim\inf}_{t \to \infty} \lVert \nabla_{\!\!\bm{\theta} ^t}\mathbb{E}_{\tilde{\mathcal{X}}} G(\tilde{\mathcal{X}}, \bm{\theta}^t)\lVert = 0.
    \end{equation}
    In the following, we will show that $\mathop{\lim\sup}_{t \to \infty} \lVert \nabla_{\!\!\bm{\theta} ^t}\mathbb{E}_{\tilde{\mathcal{X}}} G(\tilde{\mathcal{X}}, \bm{\theta}^t)\lVert = 0$. Firstly, assume the contaary, namely
    \begin{equation}
        \label{contaary}
        \mathop{\lim\sup}_{t \to \infty} \lVert\nabla_{\!\!\bm{\theta} ^t}\mathbb{E}_{\tilde{\mathcal{X}}} G(\tilde{\mathcal{X}}, \bm{\theta}^t)\lVert \geq \epsilon > 0.
    \end{equation}
    Let $\{ m_j\}$ and $\{ n_j\}$ be the sequences of indexes such that
    \begin{equation}
        m_j < n_j <m_{j+1}, 
    \end{equation}
    \begin{equation}
        \label{eq_45}
        \frac{\epsilon}{3} <  \lVert\nabla_{\!\!\bm{\theta} ^t}\mathbb{E}_{\tilde{\mathcal{X}}} G(\tilde{\mathcal{X}}, \bm{\theta}^t) \lVert, \ \ \  m_j \leq t < n_j,
    \end{equation}
    \begin{equation}
        \lVert\nabla_{\!\!\bm{\theta} ^t}\mathbb{E}_{\tilde{\mathcal{X}}} G(\tilde{\mathcal{X}}, \bm{\theta}^t) \lVert \leq \frac{\epsilon}{3}, \ \ \  n_j \leq t < m_{j+1}.
    \end{equation}
    Since $G(\tilde{\mathcal{X}}, \bm{\theta}^t)$ is Lipschitz-smooth, it is easy to see that $ \mathbb{E}_{\tilde{\mathcal{X}}} G(\tilde{\mathcal{X}}, \bm{\theta}^t)$ is also Lipschitz-smooth. Suppose that the corresponding Lipschitz constant is $L'$.
    Let $\overline{j}$ be a sufficiently large index such that
    \begin{equation}
        \sum_{t=m_{\overline{j}}}^{\infty} \alpha^t \lVert \nabla_{\!\!\bm{\theta} ^t}\mathbb{E}_{\tilde{\mathcal{X}}} G(\tilde{\mathcal{X}}, \bm{\theta}^t)\lVert^2 < \frac{\epsilon^2}{9\sqrt{\sigma} L'}.
    \end{equation}
    For any $j \geq \overline{j}$ and any $m$ with $m_j \leq m \leq n_j - 1$, we have
    \begin{align}
        \lVert \nabla_{\!\!\bm{\theta}^{n_j}} \mathbb{E}_{\tilde{\mathcal{X}}} G(\tilde{\mathcal{X}}, \bm{\theta}^{n_j}) - \nabla_{\!\!\bm{\theta}^{m}} \mathbb{E}_{\tilde{\mathcal{X}}} G(\tilde{\mathcal{X}}, \bm{\theta}^{m}) \lVert
        \leq\  & \sum_{t=m}^{n_j-1} \lVert  \nabla_{\!\!\bm{\theta}^{t+1}} \mathbb{E}_{\tilde{\mathcal{X}}} G(\tilde{\mathcal{X}}, \bm{\theta}^{t+1}) -  \nabla_{\!\!\bm{\theta}^t} \mathbb{E}_{\tilde{\mathcal{X}}} G(\tilde{\mathcal{X}}, \bm{\theta}^t) \lVert \\
        \leq\  & L' \sum_{t=m}^{n_j-1} \lVert \bm{\theta}^{t+1} - \bm{\theta}^{t} \lVert \\
        =\  &L' \sum_{t=m}^{n_j-1} \alpha^t  \lVert \nabla_{\!\!\bm{\theta}^t} F(\tilde{\mathcal{X}}, \tilde{\mathcal{U}}, \bm{\theta}^t) \lVert.
    \end{align}
    By taking the expectation over $\tilde{\mathcal{X}},\tilde{\mathcal{U}}$, we have
    \begin{align}
        \lVert \nabla_{\!\!\bm{\theta}^{n_j}} \mathbb{E}_{\tilde{\mathcal{X}}} G(\tilde{\mathcal{X}}, \bm{\theta}^{n_j}) - \nabla_{\!\!\bm{\theta}^{m}} \mathbb{E}_{\tilde{\mathcal{X}}} G(\tilde{\mathcal{X}}, \bm{\theta}^{m}) \lVert
        \leq\  & L' \sum_{t=m}^{n_j-1} \alpha^t  \mathbb{E}_{\tilde{\mathcal{X}}, \tilde{\mathcal{U}}}\lVert \nabla_{\!\!\bm{\theta}^t} F(\tilde{\mathcal{X}}, \tilde{\mathcal{U}}, \bm{\theta}^t) \lVert \\
        \label{eq_52}
        \leq\  & \sqrt{\sigma} L' \sum_{t=m}^{n_j-1} \alpha^t  \lVert \nabla_{\!\!\bm{\theta}^t} \mathbb{E}_{\tilde{\mathcal{X}}} G(\tilde{\mathcal{X}}, \bm{\theta}^t) \lVert \\
        \label{eq_53}
        \leq\  & \frac{3\sqrt{\sigma} L'}{\epsilon}  \sum_{t=m}^{n_j-1} \alpha^t  \lVert \nabla_{\!\!\bm{\theta}^t} \mathbb{E}_{\tilde{\mathcal{X}}} G(\tilde{\mathcal{X}}, \bm{\theta}^t) \lVert^2 \\
        \leq\  & \frac{3\sqrt{\sigma} L'}{\epsilon} \frac{\epsilon^2}{9\sqrt{\sigma} L'} \\
        =\  &\frac{\epsilon}{3},
    \end{align}
    where Inequality (\ref{eq_53}) follows from Inequality (\ref{eq_45}) and Inequality (\ref{eq_52}) follows from 
    \begin{equation}
        \mathbb{E}_{\tilde{\mathcal{X}}, \tilde{\mathcal{U}}}\lVert \nabla_{\!\!\bm{\theta}^t} F(\tilde{\mathcal{X}}, \tilde{\mathcal{U}}, \bm{\theta}^t) \lVert \leq 
    \sqrt{\mathbb{E}_{\tilde{\mathcal{X}}, \tilde{\mathcal{U}}}\lVert \nabla_{\!\!\bm{\theta}^t} F(\tilde{\mathcal{X}}, \tilde{\mathcal{U}}, \bm{\theta}^t) \lVert^2 
    } \leq
    \sqrt{\sigma} \lVert \nabla_{\!\!\bm{\theta}^t} \mathbb{E}_{\tilde{\mathcal{X}}} G(\tilde{\mathcal{X}}, \bm{\theta}^t) \lVert.
    \end{equation} 
    Thus, we have
    \begin{equation}
        \lVert   \nabla_{\!\!\bm{\theta}^{m}} \mathbb{E}_{\tilde{\mathcal{X}}} G(\tilde{\mathcal{X}},\bm{\theta}^{m}) \lVert \leq  \lVert \nabla_{\!\!\bm{\theta}^{n_j}} \mathbb{E}_{\tilde{\mathcal{X}}} G(\tilde{\mathcal{X}},\bm{\theta}^{n_j}) \lVert + \frac{\epsilon}{3}
        \leq \frac{2\epsilon}{3},\ \ \  \forall j \geq \overline{j},m_j \leq m \leq n_j -1.
    \end{equation}
    As the inequality holds for all $m \geq m_{\overline{j}}$, we finally obtain
    \begin{equation}
        \lVert   \nabla_{\!\!\bm{\theta}^{m}} \mathbb{E}_{\tilde{\mathcal{X}}} G(\tilde{\mathcal{X}},\bm{\theta}^{m}) \lVert \leq \frac{2\epsilon}{3}, \ \ \  \forall m \geq m_{\overline{j}},
    \end{equation}
    which contradicts Inequality (\ref{contaary}), implying that 
    \begin{equation}
        \mathop{\lim \inf}_{t \to \infty} \lVert \nabla_{\!\!\bm{\theta}^t} \mathbb{E}_{\tilde{\mathcal{X}}} G(\tilde{\mathcal{X}},\bm{\theta}^t) \lVert 
        = \mathop{\lim \sup}_{t \to \infty} \lVert \nabla_{\!\!\bm{\theta}^t} \mathbb{E}_{\tilde{\mathcal{X}}} G(\tilde{\mathcal{X}},\bm{\theta}^t) \lVert
        = 0.
    \end{equation}
    Therefore, we prove that $\lim_{t \to \infty} \lVert \nabla_{\!\!\bm{\theta}^t} \mathbb{E}_{\tilde{\mathcal{X}}} G(\tilde{\mathcal{X}},\bm{\theta}^t) \lVert = 0$.
\end{proof}

\section{Details of Experiments}
\textbf{Datasets. }(1) The CIFAR-10 / CIFAR-100 datasets consist of 60,000 32x32 colored images of 10 / 100 classes, 50,000 for training and 10,000 for test. Following the common practice of SSL \cite{oliver2018realistic, berthelot2019mixmatch, verma2019interpolation, tarvainen2017mean, athiwaratkun2018there}, we hold out 5k images from the training set as the validation set. Images are normalized with channel means and standard deviations for pre-processing. Then data augmentation is performed by 4x4 random translation followed by random horizontal flip \cite{He_2016_CVPR, huang2019convolutional}. On CIFAR-10, we preserve 100, 200 and 400 labels per class respectively, corresponding to 1000, 2000, 4000 labeled samples in total. All other samples are unlabeled. We randomly split the dataset for 5 times to conduct multiple experiments, and report the mean test errors associated with standard deviations. Similarly, On CIFAR-100, evaluation is performed with 40 and 100 randomly preserved labeled samples per class. (2) SVHN consists of 32x32 colored images of digits. 73,257 images for training, 26,032 images for testing and 531,131 images for additional training are provided. Following \cite{luo2018smooth, tarvainen2017mean}, we merely perform random 2x2 translation to augment the training set, and hold out 1,000 images for validation. Similar to CIFAR, we randomly preserve 500 and 1,000 labels for experiments. (3) STL-10 \cite{coates2011analysis} contains 5,000 training examples divided into 10 predefined folds with 1000 examples each, and 100,000 unlabeled images drawn from a similar—but not identical—data distribution. All the samples are 96x96 colored images. We use the same experimental protocol as \cite{berthelot2019mixmatch}.

\textbf{Networks.}
Our experiments are based on a 13-layer CNN (CNN-13) and the Wide-RestNet-28-2 (WRN-28) network. The CNN-13 network has been adopted as the standard model for experiments by state-of-the-art SSL algorithms \cite{verma2019interpolation, tarvainen2017mean, athiwaratkun2018there, miyato2018virtual, luo2018smooth, park2018adversarial}. Following \cite{verma2019interpolation}, we remove the Gaussian noise layer and the dropout layer in the network. Other methods use these techniques if mentioned in their original papers, which provide stronger regularization. Some recent works adopt the WRN-28 network \cite{oliver2018realistic, berthelot2019mixmatch} in their experiments. We also implement \textit{Meta-Semi} with WRN-28 to present comparisons with them.

\textbf{Large Validation Set.} We note that the validation set we use may be relatively large in some settings (e.g. 5,000 for validation on CIFAR-10 with 1,000 labeled examples). However, since most prior SSL methods do so, we simply follow them to produce comparable results with them in the paper. On the other hand, as discussed in the sensitivity test, our method is less sensitive to the only tunable hyper-parameter $\beta$ and thus requires less validation efforts. To further demonstrate this point, we perform a four-fold cross-validation on CNN-13 based \textit{Meta-Semi} with 1,000 labeled samples on CIFAR-10 to search for the optimal $\beta$. Our method achieves a test error of $10.96 \pm 0.56\%$, which is slightly higher than the $10.27 \pm 0.66 \%$ of using addition 5,000 labeled samples for validation, but still significantly outperforms baselines.

\textbf{Training details.}
The CNN-13 network uses the SGD optimizer with a Nesterov momentum of 0.9. The L2 regularization coefficient is set to 1e-4, and the initial learning rate is set to 0.1. For all experiments with CNN-13, we train the network for 600 epochs using the cosine learning rate annealing technique \cite{loshchilov2016sgdr, huang2017snapshot, verma2019interpolation}. The batch size of labeled samples and unlabeled samples are set to 25 and 75 respectively. To generate pseudo labels for unlabeled samples, we use an exponential moving average on model parameters with a decay rate of 0.999. For WRN-28, we adopt exactly the same training details as \cite{berthelot2019mixmatch} except for the batch size: we use 32 for labeled samples and 96 for unlabeled samples. The ratio of labeled/unlabeled samples in each mini-batch is always set to 1:3 in \textit{Meta-Semi}, which consistently achieves excellent performance on the validation set, and does not need to be tuned for the specific SSL task.

\textbf{Baselines.}
Our method is compared with several state-of-the-art baselines including SSL algorithms and a meta-reweighting method.
\begin{itemize}
    \item $\Pi$-model \cite{laine2016temporal} enforces the model predictions to remain the same when different augmentation and dropout modes are performed.
    \item Temp-ensemble \cite{laine2016temporal} attaches a soft pseudo label for each unlabeled sample by performing a moving average on the historical predictions of networks.
    \item Mean Teacher (MT) \cite{tarvainen2017mean} establishes a teacher network by performing exponential moving average on the parameters of the model, and leverages the teacher networks to produces supervision for unlabeled data.
    \item Virtual Adversarial Training (VAT) \cite{miyato2018virtual} adds adversarial perturbations to the samples and enforce the model to have the same predictions on perturbed samples and the original samples.
    \item Smooth Neighbors on Teacher Graphs (SNTG)\cite{luo2018smooth} constructs a teacher graph to regularize the feature distribution of unlabeled samples.
    \item Learning to Reweight \cite{ren2018learning} proposes to reweight different training samples by solving a similar meta-learning problem to us. Since their original algorithm requires labels of all the training, we adopt a version modified for SSL in this paper. In specific, we retain our approach of generating pseudo-labeled samples, but use their reweighting strategy.
    \item MT + Fast SWA \cite{athiwaratkun2018there} is an improved MT algorithm using a fast stochastic weight averaging optimizer.
    \item Interpolation Consistency Training (ICT) \cite{verma2019interpolation} encourages the prediction on an interpolation of unlabeled
    samples to be consistent with the interpolation of the predictions on those points. They first use MixUp augmentation in deep SSL.
    \item MixMatch \cite{berthelot2019mixmatch} is a holistic deep SSL approach that integrates various dominant consistency regularization techniques.
\end{itemize}
We implement these methods in the same codebase, and search for the best hyper-parameters for them on the validation set according to the recommendations provided by their original papers. Notably, for MixMatch \cite{berthelot2019mixmatch}, we fix the sharpening temperature $T=0.5$ and the number of unlabeled augmentations $K=2$, and adjust the $\alpha$  parameter for Beta distribution and the unsupervised loss coefficient $\lambda_{\tilde{\mathcal{U}}}$, as suggested by the paper. We first reproduce the CIFAR-10 results of MixMatch reported by their paper, and then tune $\alpha$ and $\lambda_{\tilde{\mathcal{U}}}$ on the validation set of CIFAR-100.

\end{document}